\documentclass[accepted,dvipsnames,colorlinks=true, linkcolor=BrickRed, urlcolor=Blue, citecolor=Blue, anchorcolor=blue, backref=page]{uai2026}

\usepackage[american]{babel}

\usepackage[numbers,compress,sort]{natbib}
\usepackage{mathtools}
\usepackage{booktabs}
\usepackage{tikz}

\usepackage{microtype}
\usepackage{graphicx}
\usepackage{subcaption}
\usepackage{booktabs}

\RequirePackage{algorithm}
\RequirePackage{algorithmic}

\usepackage{amsmath}
\usepackage{amssymb}
\usepackage{mathtools}
\usepackage{amsthm}

\renewcommand*{\backref}[1]{}
\renewcommand*{\backrefalt}[4]{
    \ifcase #1
          \or [Cited on p.~#2.]
          \else [Cited on p.~#2.]
    \fi
}
\usepackage{thmtools}
\usepackage[capitalize]{cleveref}

\theoremstyle{plain}
\newtheorem{theorem}{Theorem}[section]
\newtheorem{proposition}[theorem]{Proposition}
\newtheorem{lemma}[theorem]{Lemma}
\newtheorem{corollary}[theorem]{Corollary}
\theoremstyle{definition}
\newtheorem{definition}[theorem]{Definition}
\newtheorem{assumption}[theorem]{Assumption}
\theoremstyle{remark}
\newtheorem{remark}[theorem]{Remark}

\usepackage[utf8]{inputenc}
\usepackage[T1]{fontenc}

\usepackage{nccmath}
\usepackage{bm}
\usepackage{url}
\usepackage{multirow}
\usepackage{amsfonts} 
\usepackage{comment}
\usepackage{tikz}
\usetikzlibrary{shapes.geometric, arrows, shadows, bayesnet}
\usepackage{xcolor}

\usepackage{enumitem}
\usepackage{titlesec}
\usepackage{soul}
\usepackage{enumitem} 
\usepackage{doi}
\usepackage{thmtools,thm-restate}
\RequirePackage{amsmath}
\RequirePackage{amssymb}
\RequirePackage{mathtools}
\numberwithin{equation}{section}
\usepackage{times}

\crefname{figure}{Fig.}{Figs.}
\crefname{definition}{Defn.}{Defns.}
\crefname{corollary}{Cor.}{Cors.}
\crefname{proposition}{Prop.}{Props.}
\crefname{theorem}{Thm.}{Thms.}
\crefname{remark}{Remark}{Remarks}
\crefname{principle}{Principle}{Principles}
\crefname{lemma}{Lemma}{Lemmata}
\crefname{claim}{Claim}{Claims}
\crefname{table}{Tab.}{Tabs.}
\crefname{section}{\S}{\S\S}
\crefname{subsection}{\S}{\S\S}
\crefname{subsubsection}{\S}{\S\S}
\crefname{assumption}{Asm.}{Asms.}
\crefname{appendix}{Appx.}{Appx.}
\crefname{equation}{Eq.}{Eqs.}
\crefname{algorithm}{Alg.}{Algs.}

\renewcommand{\mathbf}[1]{{\bm{#1}}}
\newcommand{\ab}{\mathbf{a}}

\newcommand{\ub}{\mathbf{u}}

\newcommand{\xb}{\mathbf{x}}
\newcommand{\yb}{\mathbf{y}}
\newcommand{\zb}{\mathbf{z}}

\newcommand{\Ab}{\mathbf{A}}

\newcommand{\Db}{\mathbf{D}}

\newcommand{\Ib}{\mathbf{I}}
\newcommand{\Jb}{\mathbf{J}}

\newcommand{\Lb}{\mathbf{L}}

\newcommand{\Ob}{\mathbf{O}}

\newcommand{\Acal}{\mathcal{A}}

\newcommand{\Dcal}{\mathcal{D}}
\newcommand{\Ecal}{\mathcal{E}}

\newcommand{\Gcal}{\mathcal{G}}

\newcommand{\Ical}{\mathcal{I}}

\newcommand{\Ncal}{\mathcal{N}}

\newcommand{\RR}{\mathbb{R}} 
\newcommand*{\thetab}{{\bm{\theta}}}

\newcommand*{\epsilonb}{{\bm{\varepsilon}}}

\newcommand*{\gammab}{{\bm{\gamma}}}

\newcommand*{\phib}{{\bm{\phi}}}

\newcommand{\pa}{\mathrm{pa}}

\newcommand{\W}{\textsc{W}}
\newcommand{\regret}{\textsc{Regret}}

\ifx\proof\undefined 
\RequirePackage{amsthm}
\fi

\ifx\BlackBox\undefined
\newcommand{\BlackBox}{\rule{1.5ex}{1.5ex}}  
\fi

\ifx\QED\undefined
\def\QED{~\rule[-1pt]{5pt}{5pt}\par\medskip}
\fi

\ifx\proof\undefined
\newenvironment{proof}{\par\noindent{\bf Proof\ }}{\hfill\BlackBox\\[2mm]}
\fi
\ifx\proofsketch\undefined
\newenvironment{proofsketch}{\par\noindent{\bf Proof Sketch\ }}{\hfill\BlackBox\\[2mm]}
\fi

\ifx\theorem\undefined
\newtheorem{theorem}{Theorem}
\numberwithin{theorem}{section}
\fi
\ifx\property\undefined
\newtheorem{property}{Property}
\fi
\ifx\corollary\undefined
\newtheorem{corollary}[theorem]{Corollary}
\fi
\ifx\lemma\undefined
\newtheorem{lemma}[theorem]{Lemma}
\fi
\ifx\proposition\undefined
\newtheorem{proposition}[theorem]{Proposition}
\fi
\ifx\assum\undefined

\fi

\ifx\definition\undefined
\newtheorem{definition}[theorem]{Definition}
\fi

\ifx\remark\undefined
\newtheorem{remark}[theorem]{Remark}
\fi
\ifx\example\undefined
\newtheorem{example}[theorem]{Example}
\fi
\ifx\lemma\undefined
\newtheorem{lemma}[theorem]{Lemma}
\fi
\ifx\conjecture\undefined
\newtheorem{conjecture}[theorem]{Conjecture}
\fi
\ifx\fact\undefined
\newtheorem{fact}[theorem]{Fact}
\fi
\ifx\claim\undefined
\newtheorem{claim}[theorem]{Claim}
\fi
\ifx\assum\undefined

\fi

\newcommand\independent{\protect\mathpalette{\protect\independenT}{\perp}}
\def\independenT#1#2{\mathrel{\rlap{$#1#2$}\mkern2mu{#1#2}}}

\DeclareMathOperator*{\argmax}{arg\,max}
\DeclareMathOperator*{\argmin}{arg\,min}

  \usepackage{colortbl}

  \newcommand{\norm}[1]{\left\lVert#1\right\rVert}

\usepackage{setspace}

\usepackage[toc,page,header]{appendix}
\usepackage{minitoc}

\AtBeginDocument{%
  
}

\newcommand{\ELBO}{\mathrm{ELBO}}

\newcommand{\R}{\mathbb{R}}

\newcommand{\N}{\mathbb{N}}

\newcommand{\ba}{\bm{a}}

\newcommand{\bu}{\bm{u}}

\newcommand{\bx}{\bm{x}}
\newcommand{\by}{\bm{y}}
\newcommand{\bz}{\bm{z}}

\newcommand{\iid}{\stackrel{\mathrm{iid}}{\sim}}
\newcommand{\rmp}{\mathrm{p}}
\newcommand{\rmq}{\mathrm{q}}

\newcommand{\cA}{\mathcal{A}}

\newcommand{\cD}{\mathcal{D}}
\newcommand{\cE}{\mathcal{E}}

\newcommand{\cG}{\mathcal{G}}

\newcommand{\cI}{\mathcal{I}}

\newcommand{\cM}{\mathcal{M}}
\newcommand{\cN}{\mathcal{N}}
\newcommand{\cO}{\mathcal{O}}
\newcommand{\cP}{\mathcal{P}}

\newcommand{\cS}{{\mathcal{S}}}

\newcommand{\bA}{\bm{A}}

\newcommand{\bC}{\bm{C}}
\newcommand{\bD}{\bm{D}}

\newcommand{\bF}{\bm{F}}

\newcommand{\bL}{\bm{L}}
\newcommand{\bM}{\bm{M}}

\newcommand{\bO}{\bm{O}}
\newcommand{\bP}{\bm{P}}

\newcommand{\bU}{\bm{U}}
\newcommand{\bV}{\bm{V}}

\newcommand{\bZ}{\bm{Z}}
\renewcommand{\P}{\mathbb{P}}

\newcommand{\E}[1]{\mathbb{E}\left[#1\right]}

\newcommand{\EE}[2]{\mathbb{E}_{#1}\left[#2\right]}

\newcommand{\KL}[2]{D_{\textsc{KL}}\left( #1 \parallel #2\right)}

\newcommand{\rmg}{\textrm{g}}
\newcommand{\beps}{\mathbf{\epsilon}}
\newcommand{\mbF}{\mathbf{F}}

\newcommand{\dd}{\mathrm{d}}
\newcommand{\rmdo}{\textrm{do}}
\newcommand{\ch}{\textrm{ch}}
\newcommand{\zetab}{{\bm\zeta}}
\usepackage[most]{tcolorbox}

\newcommand{\BW}[1]{{\color{blue} BW: {#1}}}
\newcommand{\blue}[1]{{\color{blue} {#1}}}
\newcommand{\jvk}[1]{{\color{ForestGreen} JvK: {#1}}}
\renewcommand{\blue}[1]{}

\title{Multi-Domain Empirical Bayes for Linearly-Mixed Causal Representations}

\author[1]{\href{mailto:<@example.edu>}{}{Bohan Wu}}
\author[2]{Julius von K\"ugelgen}
\author[1]{David M.\ Blei}
\affil[1]{
    Department of Statistics\\
    Columbia University\\
    USA
}
\affil[2]{
    Seminar for Statistics\\
    ETH Z\"urich\\
    Switzerland
}

\begin{document}
\maketitle
\begin{abstract}
\vspace{-.5em}
Causal representation learning (CRL) aims to learn low-dimensional causal latent variables from high-dimensional observations. While identifiability
has been extensively studied for CRL, estimation has been less explored. In this paper, we explore the use of empirical Bayes (EB) to estimate causal representations. In particular, we consider the problem of learning from data from multiple domains, where differences between domains are modeled by interventions in a shared underlying causal model. Multi-domain CRL naturally poses a simultaneous inference problem that EB is designed to tackle.
Here, we propose an EB
$f$-modeling algorithm that improves the quality of learned causal variables by exploiting invariant structure within and across domains. Specifically, we consider a linear measurement model and interventional priors arising from a shared acyclic SCM. When the graph and intervention targets are known, we develop an EM-style algorithm based on causally structured score matching.  We further discuss EB $\rmg$-modeling in the context of existing CRL approaches. 
In experiments on synthetic data,  our proposed method achieves more accurate estimation than other methods for CRL. 
\end{abstract}

\section{Introduction}
\looseness-1 
Causal representation learning (CRL) seeks to recover low-dimensional latent causal variables from high-dimensional realizations~\citep{scholkopf2002learning,moran2026towards,acart2026learning}. 
Since this task is highly challenging in general~\citep{hyvarinen1999nonlinear,locatello2019challenging}, a common approach is to leverage heterogeneous (non-i.i.d.) data as a learning signal~\citep{von2024identifiable,hyvarinen2019nonlinear}.
A central idea is that  invariances under heterogeneity can reveal causal information~\citep{yao2025unifying}. 
A prominent example
is grouped data from different environments or domains.
From a causal perspective, such \emph{multi-domain} data is typically viewed as arising from sparse interventions in an underlying causal model~\citep{peters2016causal,perry2022causal}, while the remaining causal mechanisms and the measurement process are shared across domains.

\looseness-1 
Most prior work on CRL, including in the multi-domain setting, 
has focused on the question of \emph{identifiability}~\citep{squires2023linear,Buchholz2023,von2023nonparametric,ahuja2023interventional,varici2025score,zhang2024identifiability,zhang2024causal,wendong2023causal,jin2024learning}: under what conditions and up to what ambiguities
can the latent causal variables provably be recovered at the population level, i.e., in the
infinite-data regime?
In this paper, we focus on the \textit{estimation problem}. How can we learn causal representations from finite samples of noisy measurements?

CRL naturally amounts to a \emph{simultaneous inference} problem: each observation has its own local latent variable, yet
these variables follow a shared distribution induced by the underlying causal model. In the multi-domain CRL setting, such information sharing occurs within and across domains, since the measurement process and parts of the causal model are assumed to remain invariant.

In statistics, the idea of \textit{empirical Bayes} (EB) offers an elegant solution to the problem of simultaneous inference of many local latent variables~\citep{robbins1956empirical,efron2019bayes,Ignatiadis2025EB,wu2025bayesian}. The canonical EB setup assumes all local latent variables
are i.i.d.\ according to a shared and unknown prior, and then solves the problem of simultaneous inference
via shrinkage~\citep{james1961estimation}. Multi-domain CRL provides a novel version of this problem, where the local variables are the latent causal variables,
and the domains exhibit structured heterogeneity through sparse interventions. We develop EB methods for this CRL setting.

\paragraph{Overview and contributions.} 
In this work, e adopt a probabilistic perspective on CRL with noisy measurements and focus on finite-sample estimation via EB.
We first develop the connection between CRL and EB
at the conceptual level of modeling principles~(\cref{sec:problem_setting}), and then study a concrete class of noisy multi-domain latent causal models~(\cref{sec:model_class}).
Specifically, we consider linear measurement models~(\cref{sec:linear_measurement_model}) with interventional priors induced by an acyclic latent SCM with known causal graph and intervention targets~(\cref{sec:priors}).

\looseness-1 
We then explore EB estimation for this model class.
Under an orthogonality condition on the mixing matrix, projecting observations yields a normal means model, which facilitates an EM-style $f$-modeling EB approach~(\cref{sec:EB_estimation_inference}), wherein posterior denoising can be expressed through Tweedie's formulas in terms of the score of the marginal distribution. We leverage the causal graph structure to parameterize and efficiently estimate this score  across domains~(\cref{sec:causal_score_matching}), yielding an iterative algorithm (\cref{alg:iterative-SM}) that alternates score estimation, Tweedie updates for denoising latents, and
EM updates for the mixing matrix and noise variance.
We then discuss the
$\rmg$-modeling EB approach and establish connections to existing methods that fall into this category~(\cref{sec:EB_g_modeling}).
Through empirical studies on synthetic interventional data~(\cref{sec:experiments}), we demonstrate that CRL $f$-modeling improves latent recovery relative to natural baselines and provides stable performance across domains.

\paragraph{Notation.} \looseness-1 
We write column vectors in bold lowercase (e.g., $\bx \in \R^d$), with $x_i$ denoting the $i$\textsuperscript{th} entry, and matrices in bold uppercase (e.g., $\bA \in \R^{d \times k}$), with $\bA_i$ denoting the $i$\textsuperscript{th} row and $\bA_{ij}$ the $(i,j)$\textsuperscript{th} entry. 
Let $\cO^{d \times k}$ denote the Stiefel manifold of $d \times k$ matrices with orthonormal columns. For $n \in \N$, define $[n] := \{1, \dots, n\}$. 
  The set $\cP(\Omega)$ stands for the set of
  probability measures over $\Omega$. In a directed graph, $\pa(j)$
  and 
$\ch(j)$ denote the parents
and
children of node~$j$.
Partial derivatives w.r.t.\ the $j$\textsuperscript{th} argument are denoted by $\partial_j$ and gradients by $\nabla$. If $\bx \in \R^d$,  $\bx \odot \bx$ denotes the vector of element-wise squares, i.e., $\bx \odot \bx = (x_1^2,\ldots,x_d^2)^\top$. 

\section{Causal Empirical Bayes}
\label{sec:problem_setting}
\looseness-1 
Consider $M$ domains, each of which is characterized by a fixed observed action or perturbation label $\ab_e\in\Acal:=\{\ab_1, ..., \ab_M\}\subseteq
\RR^{d_A}$.
For each $\ab_e\in\Acal$, we have access to an i.i.d.\ sample of realizations $\xb\in\RR^{d_X}$ from an unknown population distribution $\rmp^\star(\xb\mid\ab_e)$, i.e., we observe data
\begin{equation}
\begin{aligned}
\label{eq:data}         
    \Dcal&=\left(\left(\left(\xb_{ei}\right)_{i=1}^{N_e}, \ab_e
    \right)_{e=1}^M\right), 
    \\
    \text{where} \qquad 
    \left(\xb_{ei}\right)_{i=1}^{N_e}&\iid \rmp^\star(\xb\mid\ab_e).
\end{aligned}
\end{equation}
The distributions $
\{\rmp^\star(\bx\mid\ab)\}_{\ab\in\Acal}$ can be thought of as \emph{interventional} distributions, 
in the sense that each $\ab_e$ contains (partial) information on which intervention or experiment was performed in domain~$e$. 

\looseness-1 
To model $\rmp^\star(\xb\mid\ab)$, we use a latent variable model with local latents $\zb\in\RR^{d_Z}$, one for each~$\xb$. 
In the context of single-cell measurements $\xb$, these latent variables could represent, e.g., gene programs or clusters of proteins. 
Importantly, we assume that  $\zb$ fully mediates the effect of $\ab$ on~$\xb$, so that the relationship between $\zb$ and $\xb$ (e.g., the measurement process) is invariant across domains while only the distribution of $\zb$ may change, see~\Cref{fig:graphical_model}. 
 We formalize this as follows.
\begin{assumption}[Causal empirical Bayes]
\label{assump:causal-eb}
For all $\ab\in\Acal$, there exists an interventional prior $\rmp^\star(\bz\mid\ab)$ s.t.\ the true interventional marginal $\rmp^\star(\bx\mid\ab)$ can be expressed~as
\begin{equation} \label{eq:causal-eb}
        \rmp^\star(\bx\mid\ab) = \int_{\RR^{d_Z}} \, \rmp^\star(\bx\mid\bz)\,\rmp^\star(\bz\mid\ab)\, d\bz. 
\end{equation}
\end{assumption}
\looseness-1 We refer to~\cref{assump:causal-eb} as the \emph{causal empirical Bayes assumption}, as it connects the Bayesian latent variable model  on the RHS to the true causal data-generating process on the LHS. Accordingly, we refer to $\bz$ as the \emph{latent causal variables}. 
\Cref{eq:causal-eb} admits two complementary interpretations.
    Read from right to left, it implies that the probabilistic model $\bz \sim \rmp^\star(\bz\mid\ab)$ and $\bx \sim \rmp^\star(\bx\mid\bz)$ is a \emph{well-specified} generative model for the true interventional distribution $\rmp^\star(\bx\mid\ab)$.
    Read from left to right, on the other hand, it suggests an \emph{empirical Bayes} interpretation: given the true marginal $\rmp^\star(\bx\mid\ab)$ and the likelihood $\rmp^\star(\bx\mid\bz)$, we may infer the true prior $\rmp^\star(\bz\mid\ab)$. In this view, the prior can be learned from the data using a $\rmg$-modeling approach \citep{Efron2015deconvolution} or indirectly using $f$-modeling \citep{robbins1956empirical,Efron2011}, 
   see~\cref{sec:EB_estimation_inference} for details.

Our target of inference is
the posterior of the latent causal variables: 
\begin{equation} \label{target-posterior}
        \rmp^\star \left(\bz\mid\bx_{ei}, \ba_e \right) \propto \rmp^\star(\bz\mid\ab_e)\, \rmp^\star(\bx_{ei}\mid\bz). 
\end{equation}
\looseness-1 Here, both the measurement model $\rmp^\star(\bx\mid\bz)$ and the domain-specific priors $\rmp^\star(\bz\mid\ab_e)$ are unknown quantities. 

\section{CRL Model Class}
\label{sec:model_class}
We now describe the assumptions we place on the measurement model~(\cref{sec:linear_measurement_model}) and domain-specific priors~(\cref{sec:priors}).

\begin{figure}[t]
    \newcommand{\xshift}{6em}
    \newcommand{\yshift}{4em}
    \centering
    \begin{tikzpicture}
        \centering
        \node(a)[obs]{$\ab_e$};
        \node(z)[latent, xshift=\xshift]{$\zb_{ei}$};
        \node(x)[obs, xshift=2*\xshift]{$\xb_{ei}$};
        \edge[thick, -stealth]{a}{z};
        \edge[thick, -stealth]{z}{x};
        \plate[inner sep=0.5em]{i} {(z) (x)} {$N_e$};
        \plate[]{e} {(a) (z) (x) (i)} {$M$};
 \end{tikzpicture}
    \caption{\textbf{Assumed graphical model.} Shaded nodes are observed, white nodes are latent. The inner plate is over realizations $i=1, ...,N_e$, the outer over domains $e=1,...,M$.}
    \label{fig:graphical_model}
\end{figure}

\subsection{Linear Gaussian measurement model}
\label{sec:linear_measurement_model}
We focus on the setting in which the measurement model is linear with additive, zero-mean, isotropic Gaussian noise.
\begin{assumption}[Linear Gaussian measurement model]
\label{ass:linear_measurement_model}
For all $e\!\in\![M], i \in [N_e]$, 
the generative process is
\begin{equation}
\label{eq:multi_domain_linear_measurements}
\begin{aligned}
\zb_{ei} &\sim \rmp^\star(\zb \mid \ab_e), 
\\
\xb_{ei} &= \Ab^\star \zb_{ei} + \epsilonb_{ei},
\qquad
\epsilonb_{ei}\sim\Ncal(\bm 0, \sigma^{2\star}\Ib_{d_X}),
\end{aligned}
\end{equation}
 for some unknown true mixing or loading matrix ${\Ab^\star \in \RR^{d_X \times d_Z}}$ and  unknown true noise  variance $\sigma^{2\star}$.
\end{assumption}
\looseness-1 
Consider a single domain ($M\!=\!1$) with equal latent and observed dimensionality ($d_X\!=\!d_Z$) and no mixing ($\Ab\!=\!\Ib_{d_X}$). In this setting, the model in~\eqref{eq:multi_domain_linear_measurements} corresponds to (a multi-variate version of) the classical empirical Bayes normal means model \citep{robbins1956empirical}, which aims to recover the unknown mean~$\zb_{i}\!=\!\mathbb{E}[\xb_i]$ for each observation~$\xb_{i}$, under the assumption that the means are drawn from a single shared prior ${\zb_{i}\!\sim\!\rmg}$.  The single-domain setting with $d_X\!>\!d_Z$ and \textit{unknown} $\Ab$ constitutes a generalization of this classical setting, which is related to probabilistic PCA~\citep{Tipping1999PPCA}.\footnote{\looseness-1 Without additional assumptions on $\Ab^\star$ and $\rmp^\star(\zb)$, however, the model is not identifiable and recovering components with maximal explained variance may not correspond to recovering the~$z_j$'s.
} 

For multiple domains ($M>1$), \cref{ass:linear_measurement_model} generalizes the classical setting, where each domain has a possibly different prior $\zb_{ei}\!\sim\!\rmg_e\!:=\!\rmp^\star(\zb\mid\ab_e)$.
That is, an empirical Bayes setup plays out in each domain. 
However, 
there is not only information sharing \textit{within} domains, through the shared priors $\rmg_e$, but also \textit{across} domains, through the shared measurement model and shared causal structure among
the domain-specific priors.
This raises the following central questions, which we seek to answer in the present work.
\begin{tcolorbox}[boxsep=0mm]
\textit{How can the assumed multi-domain structure be leveraged to accurately estimate the shared measurement model and the latent variables? Can empirical Bayes estimation be generalized to the model from~\eqref{eq:multi_domain_linear_measurements}?}
\end{tcolorbox}

Rather than study these questions for arbitrary nonparametric priors $\rmg_e$, we place additional structure on the domain-specific priors to incorporate the available information in the form of the observed action labels~$\ab_e$.

\subsection{Interventional Structural priors}
\label{sec:priors}
To set the
interventional priors 
$\{\rmp^\star(\bz\mid\ba_e)\}_{\ab_e \in \cA}$,
we adopt the structural causal model framework~\citep[SCM;][]{Pearl2009, Peters2017} to describe the causal relations among latent variables. 

\begin{definition}[Acyclic SCM] \label{def-SCM}
Let $\bz = (z_1, \ldots, z_d)$ denote endogenous (causally determined) variables, and let $\bu = (u_1, \ldots, u_d)$ denote exogenous (noise) variables. A \emph{structural causal model} (SCM) $\cM = (\mathbf{F}, \rho_\ub)$ over $\bz$ consists of:
(i) a collection of $d$ structural assignments:
\begin{equation}
\label{eq:structural_equations}
\bF = \left\{ z_j := f_j\left(\bz_{\pa(j)}, u_j \right) \right\}_{j=1}^d,
\end{equation}
where each $f_j$ is a measurable function mapping the values of the parents, or direct causes, $\zb_{\pa(j)}$ of $z_j$, and noise $u_j$ to the value of $z_j$; and
(ii) a distribution over the exogenous variables: $\bu \sim \rho_{\bu}$.
\looseness-1 
The \emph{causal graph $\cG$ induced by~$\cM$}, given by the directed graph with nodes $\{1,\ldots,d\}$ 
and edges $\Ecal=\{(k,j)\mid k\in\pa(j)\}$,
is assumed acyclic. The \emph{distribution $\rmp_{\cM}(\zb)$ induced by $\cM$} is given by the pushforward of $\rho_\ub$ under the structural assignment map~$\mathbf{F}$.
\end{definition}
Throughout, we additionally assume the following.\footnote{\Cref{assumption:causal-sufficiency} is slightly stronger than
causal sufficiency, which only implies independence of the noise variables. Here, we also assume identical marginal distributions for convenience.} 

\begin{assumption}[IID noise]
\label{assumption:causal-sufficiency}
The exogenous noise variables are jointly independent and identically distributed. That is,  $u_1, \ldots, u_d \iid \rho$ for some probability measure $\rho$.
\end{assumption}
\Cref{assumption:causal-sufficiency} rules out hidden confounding and guarantees that the induced distribution is Markov w.r.t.\ the induced causal graph $\cG$, 
i.e., that it obeys the following factorization, 
\begin{equation}
    \label{eq:causal_Markov_factorisation}
    \rmp_\cM(\zb)=\prod_{j=1}^{d} \rmp_\cM\big(z_j\mid\zb_{\pa(j)}\big).
\end{equation}
The structural assignments in~\cref{eq:structural_equations} represent independent mechanisms which remain invariant if other parts of the model change. Interventions are modelled by replacing a subset of these mechanisms in the original model.
Here, we consider interventions (deterministic or stochastic, perfect or imperfect) which do not introduce new parents. 
\begin{definition}[Interventions]
\label{def:interventions}
An \emph{intervention} on targets $\Ical\subseteq[d]$ in an acyclic SCM $\cM = (\mbF, \rho)$ replaces a subset  of assignments
in $\bF$ with new assignments $$\{z_j := h_j(\bz_{\pa(j)}, \tilde u_j)\}_{j\in\Ical}$$ such that~\cref{assumption:causal-sufficiency} holds for the new noise variables $\tilde\ub_\Ical\cup\ub_{[d]\setminus\Ical}$. This yields a modified SCM
$\cM'= (\mbF', \rho)$,
where $\mbF'$ matches $\mbF$ except that $f_j$ is replaced by~$h_j$ for all $j\in\Ical$.  The resulting \emph{interventional distribution} and \emph{post-intervention graph} $\cG'$ are those induced by $\cM'$.
\end{definition}

We now specify the true prior structure
by linking the observed action labels to interventions in an unknown underlying SCM. 
Specifically, we consider the case of binary $\ab_e\in\{0,1\}^{d_Z}$ indicating the intervention targets in domain~$e$, see~\cref{sec:discussion} for a discussion of alternative choices.
\begin{assumption}[Interventional priors with known targets and causal graph] 
\label{ass:SCM-prior}
\looseness-1 There exists an acyclic SCM $\cM^\star$ over~$\zb$ satisfying  
\Cref{assumption:causal-sufficiency} such that for all $e \in [M]$ the true prior $\rmp^\star(\bz\mid\ba_e)$ is an interventional distribution induced by an intervention on targets $\Ical(\ab_e)=\{j\mid a_{ej}=1\}$ in $\cM^\star$ and $$(\bz_{ei})_{i \in [N_e]}\iid \rmp^\star(\bz\mid\ba_e).$$  Moreover, the causal graph~$\cG^\star
$ induced by $\cM^\star$ is known.
\end{assumption}
Under \Cref{ass:SCM-prior}, for all $e \in [M]$, $i \in [N_e]$, and $j \in [d_Z]$, the mechanism that gives rise to $z_{eij}$ 
is given by 
\begin{equation} \label{intervention mechanism}
    z_{eij} :=
    \begin{cases}
        f_j\big(\bz_{ei\pa(j)},\, u_{eij}\big), & \text{if } a_{ej} = 0, \\
        h_j\big(\bz_{ei\pa(j)},\, \tilde u_{eij}\big), & \text{if } a_{ej} = 1,
    \end{cases} 
\end{equation}
where $f_j$ is the baseline and $h_j$ the interventional mechanism when node $j$ is targeted by $\ab_e$.
This  extends the classical empirical Bayes, which would assume  $z_{eij} \iid \rmg$ 
for a single, shared, learnable prior $\rmg$ \citep{efron2019bayes,Ignatiadis2025EB}.

\section{Estimation and Inference}
\label{sec:EB_estimation_inference}
Causal representation learning (CRL) is naturally a simultaneous inference problem, as its goal is to recover the causal variables $\{\bz_{ei}\}_{i\in[N_e],\,e\in[M]}$ 
from the available multi-domain data $\cD$ in \Cref{eq:data}. This amounts to $\sum_{e=1}^M N_e$ simultaneous
inference problems of computing the posteriors 
$\rmp^\star(\bz_{ei} \mid \bx_{ei}, \ba_e)$ in \Cref{target-posterior},
thus creating an opportunity to apply the empirical Bayes (EB) methodology.

\looseness-1 The main challenge for posterior inference is that the measurement model parameters~$\bA^\star$ and $\sigma^{2\star}$ and the priors $\rmp^\star(\zb\mid\ab_e)$ are unknown.\footnote{ 
According to~\cref{ass:SCM-prior} the causal graph underlying $\rmp^\star(\bz \mid \ba)$ is known and the intervention targets in domain $e$ are given by $\ab_e$. 
However, the shared
base causal mechanisms $\{f_j\}_{j \in [d_Z]}$ and the intervened mechanisms $\{h_j\}_{j \in [d_Z]}$ in \Cref{intervention mechanism} are unknown.} EB methods infer these quantities via~\Cref{assump:causal-eb} and
fall into two categories, {$\rmg$-modeling} and {$f$-modeling}, depending on how they 
handle the prior~\citep{Ignatiadis2025EB}.

\looseness-1 In $\rmg$-modeling, the prior (typically denoted $\rmg$) is modeled explicitly
and estimated jointly with the other unknowns by approximately optimizing the marginal likelihood of~$\cD$. The fitted model is then used for posterior inference.

In $f$-modeling, the learning of the prior remains implicit. Instead, posterior quantities are estimated directly by modeling the marginal density (typically denoted $f$) of the observed data~\citep{Efron2014}. This is often done via the classical Robbins-Tweedie formula, which uses the score of the marginal likelihood to approximate the posterior mean and covariance of the local latent variables~\citep{Eddington1940,Efron2011,meng2021estimating}. 

In the present work, we focus on $f$-modeling via Tweedie's formula and adapt this method to our multi-domain CRL setting. 
As we will show, $f$-modeling provides a direct route to obtaining the posterior summaries of $\rmp^\star(\bz \mid \bx, \ba)$ by estimating the data distribution $\rmp^\star(\bx \mid \ba)$ in \Cref{eq:causal-eb}, without ever explicitly learning the domain-specific priors $\rmp^\star(\bz \mid \ba_e)$.
Specifically, we develop an expectation maximization~(EM)~\citep{dempster1977maximum} algorithm for CRL $f$-modeling, which leverages a causally-structured score estimator.
In the remainder of this section, we describe the main steps of this algorithm. Details on score estimation are deferred to~\cref{sec:causal_score_matching}.

\subsection{High-level EM procedure }
Our EM $f$-modeling algorithm for model~\eqref{eq:multi_domain_linear_measurements} iterates between estimating the first and second moments of the latent causal variables~$\bz_{ei}$ (E-step) and estimating the measurement model parameters, i.e., the mixing matrix~$\bA$ and the noise variance $\sigma^2$ (M-step).
Each iteration runs in two steps: 
\begin{enumerate}[
]
    \item[\textbf{E:}] Fix the current estimate $\widehat{\bA}$ and $\widehat \sigma^2$
    and compute the posterior mean $\widehat{\bz_{ei}}$ and element-wise  second moments $\widehat{\bz_{ei}^2}$ using Tweedie’s formula \citep{Eddington1940}, combined with score matching \citep{hyvarinen2005estimation} (see~\cref{sec:causal_score_matching} for details on score estimation).
    \item[\textbf{M:}] Update $\widehat{\bA}$ and  $\widehat \sigma^2$ by (approximately) maximizing the likelihood given the current posterior summaries~$\widehat{\bz_{ei}}$ and $\widehat{\bz_{ei}^2}$ from the E-step.
\end{enumerate}
The full procedure, which alternates these updates for a fixed number of iterations, is summarized in~\Cref{alg:iterative-SM}. We will also refer to the algorithm as \emph{CRL $f$-modeling}. 
Whereas the M-step is relatively straight-forward, the E-step is more involved; this is also where our causal assumptions enter.

\begin{algorithm}[tb]
\caption{EM $f$-modeling with causal score matching}
\label{alg:iterative-SM}
\begin{algorithmic}
\REQUIRE Data $\Dcal=\{((\bx_{ei})_{i=1}^{N_e},\,\ba_e)\}_{e=1}^M$, causal graph $\cG=([d_Z],\cE)$, iterations $T$, damping factor $\eta\in(0,1)$.

\STATE Initialize $\widehat{\Ab}\!=\!\widehat{\Ob}\widehat{\Db}$ with $\widehat{\Ob}\!\in\!\cO^{d_X\times d_Z}$ and $\widehat{\Db}\!\in\!\R^{d_Z\times d_Z}$ diagonal, score components $\{\widehat s_j\}_{j\in[d_Z]}$, and $\widehat{\sigma}^2>0$.

\FOR{$t=1$ \textbf{to} $T$}
    \STATE Compute $\yb_{ei}\gets \widehat{\Ob}^\top \xb_{ei}$ for all $e\in[M]$ and $i\in[N_e]$.
    \STATE Update $\widehat s$ via causal score matching: \eqref{eq:score-matching-j} and~\eqref{score-matching}.
    \STATE Update $\widehat{\zb}_{ei}$ and $\widehat{\zb}^2_{ei}$ using the first- and second-order Tweedie formulas in~\eqref{eq:Tweedie} and \eqref{eq:second-order-Tweedie}.
    \STATE Update $\widehat{\Ab}=\widehat{\Ob}\widehat{\Db}$ via the analytical steps in~\eqref{A-update}.
    \STATE Update $\widehat{\sigma}^2$ using the MLE in~\eqref{mle-var}.
\ENDFOR
\STATE \textbf{return} $\{\widehat{\zb}_{ei}\}_{e\in[M],\,i\in[N_e]}$, $\widehat{\Ab}$, $\widehat{\sigma}^2$.
\end{algorithmic}
\end{algorithm}

\subsection{Tweedie's formula}
The focal point of traditional empirical Bayes theory is the normal means model with isotropic noise~\citep{robbins1956empirical,Efron2014,Efron2015deconvolution,efron2019bayes,Soloff2021},
\begin{equation}
    \xb_i=\bm\theta_i+\bm\zeta_i, \qquad \bm\zeta_i\iid\Ncal\left(\bm0,\sigma^2\Ib_{d_X}\right).
\end{equation}
A central result is Tweedie’s formula \citep[][Thm.~6.2]{Ignatiadis2025EB}, which states that the optimal estimator $t^\star$ minimizing the posterior risk $\E{\|\bm\theta-t(\bx_{i})\|_2^2}$ for any prior on $\bm\theta_{i}$ is given by
\begin{equation*}
    t^\star(\bx_{i}) := \E{\bm\theta_i \mid \bx_{i}} = \bx_{i} + \sigma^2 \nabla \log {f}(\bx_{i}),
\end{equation*}
\looseness-1
where ${f}$ denotes the marginal density of $\bx$.

\looseness-1 
By applying Tweedie's formula to model~\eqref{eq:multi_domain_linear_measurements}, we can recover the posterior mean of $\bA \bz$ given $\bx$. However, this does not lead to the posterior mean of $\bz$ as $\bA$ need not be invertible. To solve this problem, we impose an additional assumption on the class of mixing matrices $\bA$ that allows us to reduce the problem to a normal means model. 

\subsection{Reduction to normal means model}
To model the causal variable $\bz$ via Tweedie's formula, we impose the following orthogonality assumption on~$\bA^\star$.
\begin{assumption} 
\label{ass:orthonormal-A}
\looseness-1
The true mixing matrix $\bA^\star$ satisfies $(\bA^\star)^\top \bA^\star = (\bD^\star)^2$ for some diagonal~$\bD^\star \in \R^{d_Z \times d_Z}$.
\end{assumption}
\looseness-1 \Cref{ass:orthonormal-A} states
that the columns of $\bA^\star$ are orthogonal with possibly different norms, similar to assumptions exploited in independent mechanism analysis~\citep{Gresele2021,reizinger2022embrace} or principal component flows~\citep{cunningham2022principal}. 
\Cref{ass:orthonormal-A} allows us to parametrize~$\bA$ as $\bA:= \bO \bD$, where $\bO \in \cO^{d_X \times d_Z}$ is orthonormal and $\bD\in\RR^{d_Z\times d_Z}$ is diagonal. 

Let 
$\by_{ei}\!:=\!\bO^\top \bx_{ei}$. Since $\bO^\top \bO\!=\!\Ib_{d_Z}$, multiplying both sides of the linear measurement model in~\eqref{eq:multi_domain_linear_measurements}  by $\Ob^\top$ then yields the following (multi-domain) normal means model,
\begin{equation} 
\label{eq:normal-means}
    \by_{ei} = \bD \bz_{ei} + \zetab_{ei}, \qquad \zetab_{ei} \iid \cN\left(0, \sigma^2 \Ib_{d_Z}\right).
\end{equation}

\subsection{Updating the latent variables}
According to Tweedie's formula applied to~\cref{eq:normal-means}, the optimal estimator for the means $\Db\zb_{ei}$ is given by
\begin{equation}
\label{eq:tweedie_reduction}
    \E{\bD \bz_{ei} \mid \by_{ei}} = \by_{ei} + \sigma^2 \nabla \log {f}_{\ba_e}(\by_{ei}),
\end{equation}
\looseness-1
where ${f}_{\ba_e}$ is the true marginal density of $\by$ in domain~$e$, defined as the push-forward of $\rmp^\star(\zb\mid\ab_e)$ via~\eqref{eq:normal-means}.

For fixed $\widehat\Ab=\widehat\Ob\widehat\Db$ and $\widehat\sigma^2$, we therefore only need estimates of the scores~$\nabla \log {f}_{\ba_e}(\by_{ei})$ to estimate the posterior mean of $\zb_{ei}$ via~\eqref{eq:tweedie_reduction}. In~\cref{sec:causal_score_matching}, we discuss how we estimate these scores using a flexible function class to obtain an estimate $\widehat s(\yb,\ba)$ via score matching~\citep{hyvarinen2005estimation}.
We then update the empirical Bayes estimates of all local latent variables $\bz_{ei}$ using a damped version of Tweedie’s formula: 
\begin{equation} \label{eq:Tweedie}
    \widehat{\bz_{ei}} = \widehat \bD^{-1} \left(\by_{ei} + \eta \widehat \sigma^2 \widehat{s}(\by_{ei}, \ba_e) \right), 
\end{equation}
where $\eta \in [0,1]$ is the damping factor;  $\eta = 0$ corresponds to no shrinkage and $\eta = 1$ to the classical Tweedie's formula. 

We also require the posterior second moments, defined as the element-wise squares $\mathbb{E}[\bz_{ei} \odot \bz_{ei}\mid\bx_{ei},\ba_e]$. To approximate them, we use the following estimates $\widehat{\bz_{ei}^2} \in \R^{d_Z}$ obtained from the second-order Tweedie formula \citep{Efron2011,song2019generative}:
\begin{equation}\label{eq:second-order-Tweedie}
\widehat{z_{eij}^2} \!=\! \widehat{z_{eij}}^2\!+\! \widehat \sigma^2 \widehat \bD_{jj}^{-2} \!+\! \widehat \sigma^4 \widehat \bD_{jj}^{-2}\partial_j \widehat s_j(y_j, \yb_{\pa(j)}, \ba_e),
\end{equation}
where the component-wise score estimates $\widehat{s}_{jj}$ for $j\in[d_Z]$ are described in further detail in~\cref{sec:causal_score_matching}.

\subsection{Updating the measurement model }
Given our estimates of the posterior first and second moments of each $\zb_{ei}$, in the M-step we update the parameters $(\bA,\sigma^2)$ to maximize the data log-likelihood,
resulting in
\begin{equation} \label{M-step}
\widehat \bA, \widehat \sigma^2
\in \argmax_{\bA = \bO \bD, \sigma^2 \geq 0} \sum_{e = 1}^M \sum_{i = 1}^{N_e}\EE{\widehat \rmq_{ei} \left(\bz\right)}{\log \rmp_{\bA, \sigma^2}\left(\bx_{ei} \mid \bz\right)}, 
\end{equation}
where $\widehat \rmq_{ei}$ is the posterior distribution that we approximated via~\eqref{eq:Tweedie}
and~\eqref{eq:second-order-Tweedie} in the E-step. The M-step objective also corresponds to the first (energy) term in the ELBO, see~\Cref{sec:EB_g_modeling}. 

Since $\log \rmp_{\bA,\sigma^2}(\bx_{ei}\mid \bz_{ei})$ is quadratic in $\bz_{ei}$ under the Gaussian measurement model (\cref{ass:linear_measurement_model}), this objective depends on $\widehat \rmq_{ei}$ only through its first and second moments. Thus, we replace $\EE{\widehat \rmq_{ei}}{\bz_{ei}}$ and $\EE{\widehat \rmq_{ei}}{\bz_{ei}\odot \bz_{ei}}$ by the empirical Bayes estimates $\widehat{\bz_{ei}}$ and~$\widehat{\bz_{ei}^2}$ to obtain objectives for $\bA$ and $\sigma^2$, see \Cref{app-alg-details} for the detailed derivations. 

\paragraph{Updating $\widehat\Ab$.} Given the estimates $\widehat{\bz_{ei}}$ and $\widehat{\bz_{ei}^2}$, we learn $\bA$ by the maximum likelihood estimator (MLE), 
\begin{equation} \label{mle-A}
\hspace{-.7em}\widehat \bA  \in \arg\!\min_{\bA = \bO \bD} \sum_{e = 1}^M \sum_{i = 1}^{N_e} \left[ \sum_{j =1}^{d_Z}  \bD_{jj}^2 \widehat{ z^2_{eij}} - 2  \bx_{ei}^\top\bA \widehat{\bz_{ei}}\right]\!.
\end{equation}
The solution $\widehat \bA = \widehat \bO \widehat \bD$ to~\eqref{mle-A} can be computed analytically in the following steps:
\begin{align}
\bM &\gets \sum_{e = 1}^M \sum_{i=1}^{N_e} \bx_{ei}\widehat{\bz_{ei}}^\top,\qquad \left[\bU, \bm\Sigma, \bV\right] \gets \textsc{SVD}(\bM),
\nonumber
\\
\label{A-update}
\widehat \bO &\gets \bU \bV^\top, \qquad \widehat \bD_{jj}^2 \gets \frac{\bm\Sigma_{jj}}{\sum_{e = 1}^M \sum_{i=1}^{N_e} \widehat{z^2_{eij}}},
\end{align}

\paragraph{Updating $\widehat\sigma^2$.} 
Finally, we update $\sigma^2$ with the MLE: 
\begin{equation} \label{mle-var}
\widehat \sigma^2 =\frac{\sum_{e = 1}^M \sum_{i = 1}^{N_e} \left[ \sum_{j =1}^{d_Z}  \widehat \bD_{jj}^2 \widehat{ z^2_{eij}} - 2  \bx_{ei}^\top \widehat \bA \widehat{\bz_{ei}} + \|\bx_{ei}\|^2 \right]}{d_X \sum_{e = 1}^M N_e}. 
\end{equation}

Together,   \Cref{eq:Tweedie}--\eqref{eq:second-order-Tweedie} form the E-steps and \Cref{mle-A}--\eqref{mle-var} form the M-steps of the EM algorithm.  The full algorithm iterates between the E- and M-steps until convergence.

\subsection{Relaxing orthogonality} 
\looseness-1
\Cref{ass:orthonormal-A} can be relaxed to ${(\bA^\star)^\top \bA^\star\!=\! \left(\bm\Lambda^\star \right)^2}$ for an unknown positive semi-definite (p.s.d.) matrix $\bm\Lambda^\star$. This is equivalent to assuming that $\bA^\star \!=\! \bO^\star \bm\Lambda^\star$ for some orthonormal matrix $\bO^\star$. Accordingly, we parametrize ${\bA \!=\! \bO \bm\Lambda}$ with $\bO \!\in\! \cO^{d_X \times d_Z}$ and $\bm\Lambda$ a p.s.d.\ matrix. The only modifications needed are
to replace $\bD^{-1}$ with $\bm\Lambda^{-1}$ in Tweedie's update in~\eqref{eq:Tweedie} and replace the steps in~\eqref{A-update} with updates for $\bO$ and~$\bm\Lambda$ that solve \Cref{mle-A}. The second-order Tweedie update from~\eqref{eq:second-order-Tweedie} also generalizes: with $\Jb_{\widehat s}(\by_{ei},\ba_e)$ denoting the Jacobian of $\widehat s(\cdot,\ba_e)$ evaluated at $\by_{ei}$, we obtain~\citep[Thm.~1]{meng2021estimating}: 
\begin{equation*}
 \widehat{\bz_{eij}^2}
=
\widehat{\bz}_{eij}^2
+
\left[
\widehat\sigma^2\widehat{\bm\Lambda}^{-2}
+
\widehat\sigma^4\widehat{\bm\Lambda}^{-1}\Jb_{\widehat s}(\by_{ei},\ba_e)\widehat{\bm\Lambda}^{-1}
\right]_{jj}.
\end{equation*}

\section{Causal score matching}
\label{sec:causal_score_matching}
Approximating the posterior of $\zb$  via Tweedie's formula in~\cref{eq:tweedie_reduction,eq:Tweedie,eq:second-order-Tweedie} involves the score function~$\nabla \log {f}_{\ba}(\by)$. In this section, we propose a scalable method to estimate this score that allows for incorporating
knowledge of the causal directed acyclic graph (DAG) $\cG^\star$ from \Cref{ass:SCM-prior}.

\paragraph{Score decomposition.}
\looseness-1 
The true marginal ${f}_{\ba_e}(\by_{ei})$ is determined by the normal means model in~\eqref{eq:normal-means} and
depends on
$\ba_e$ only through
$\zb_{ei}\sim\rmp^\star(\bz \mid \ba_e)$.
Under~\cref{ass:SCM-prior}, 
the score of $\bz$ follows from the Markov factorization in~\cref{eq:causal_Markov_factorisation} w.r.t.\ the true
causal graph~$\cG^\star$. Its $j$\textsuperscript{th} component is given by
\begin{equation} \label{Markov-score}
\begin{aligned}
\big[\nabla \log\rmp^\star(\bz \mid& \ba)\big]_j
=
\partial_j \log \rmp^\star\big(z_j\mid \bz_{\pa(j)}, a_j\big) \\
&+\sum_{k\in\ch(j)} \partial_j \log \rmp^\star\big(z_k \mid \bz_{\pa(k)}, a_k\big)
\end{aligned}    
\end{equation}
and  only depends on the Markov blanket~\citep{pearl1988probabilistic} of $z_j$ and the intervention target labels for $z_j$ and its children.\footnote{\looseness-1 The Markov blanket of node $j$ in a DAG comprises $j$, the parents and children of $j$, and all other parents of  children of $j$.}

Even though $\yb$ is simply a noisy, element-wise rescaling of~$\zb$, the decomposition from~\cref{Markov-score} need not hold for the score $\nabla \log f_{\ba}(\by)$ due to the additional noise term~$\zetab$ in~\cref{eq:normal-means}. Instead, the factorization
of
$
f_{\ba}(\by)$ is determined by d-separation~\citep{Pearl2009} in the extended graph of $\cG^\star$ with additional nodes for $\yb$ and $\ab$ and additional edges $\{a_j\to z_j\to y_j\}_{j\in[d_Z]}$, see~\Cref{fig:true_chain_dag} for an example.

\looseness-1 In general, each component of $\nabla \log f_{\ba}(\by)$ may depend on most or all of $\yb$ and~$\ab$, i.e., it is a dense
vector field \citep[\S13.1]{Bishop2006PatternLearning}. This poses challenges for estimation, particularly when $d_Z$ is large. We therefore propose a sparse approximation of the true score that incorporates prior causal knowledge. 

\paragraph{Causal score function.}
Specifically, we
consider the class of \emph{causal score functions}
$s:\R^{d_Z} \times \{0,1\}^{d_Z}\to\R^{d_Z}$ 
w.r.t.\ a given DAG~$\cG$ over~$[d_Z]$, defined component-wise by 
\begin{equation} 
\label{s-function}
\left[s(\by, \ba) \right]_j := s_j\!\left(y_j,\yb_{\pa(j;\Gcal)},a_j\right),
\end{equation}
where
$\pa(j;\Gcal)$ refers to the parents in~$\cG$.
Natural choices of $\Gcal$ include $\Gcal=\Gcal^\star$, the empty graph, and complete DAGs w.r.t.\ the partial causal ordering induced by $\Gcal^\star$.\footnote{Using a sparser DAG can be beneficial from a bias--variance perspective: while the restricted class~\eqref{s-function} may be biased for the full score, restricting inputs to $\pa(j)$ can reduce the variance of score estimation relative to conditioning on a larger set. It also reduces computational cost and thus scales better to large DAGs.}

We fit the causal score function $s$ via score-matching~\citep{hyvarinen2005estimation} w.r.t.\ the true score, which on the population level is equivalent to minimizing the following $L_2$ loss,
\begin{equation}
\label{eq:L2_score_matching_loss}
    \hspace{-0.5em}
    \mathbf{L}^{\cG}\left( s \right)
    \!=\!
    \sum_{e=1}^M
    \EE{\by \sim f_{\ba_e}}{\norm{s(\by,\ba_e)\!-\!\nabla \log f_{\ba_e}\!\left(\by\right)}^2_2}\!.
\end{equation}
The optimal causal score function~$s^\star$ minimizing~\cref{eq:L2_score_matching_loss} coincides with $\nabla \log f_{\ba}(\by)$ (i.e., the approximation is exact) when $\cG=\cG^\star$ 
is the empty graph. 
Similarly, when $\cG = \cG^\star$ and $\sigma^2=0$ (and thus $\yb=\Db\zb$), the optimal causal score recovers the true score for leaf nodes $j$ 
(for which ${\ch(j)=\varnothing}$ and the second term on the RHS of~\cref{Markov-score} vanishes).
Otherwise, it
serves as a sparse, structured approximation.

\begin{figure}[tbp]
\centering
\begin{tikzpicture}[thick,
    scale=1, transform shape,
    node distance=1.4cm,
    every node/.style={draw=black, circle,
    font=\normalsize}
]
\node (z1) {$z_1$};
\node (z2) [right=1cm of z1] {$z_2$};
\node (z3) [right=1cm of z2] {$z_3$};

\node[below=0.35cm of z1] (y1) {$y_1$};
\node[below=0.35cm of z2] (y2) {$y_2$};
\node[below=0.35cm of z3] (y3) {$y_3$};

\node[obs, above=0.35cm of z1] (a1) {$a_1$};
\node[obs, above=0.35cm of z2] (a2) {$a_2$};
\node[obs, above=0.35cm of z3] (a3) {$a_3$};

\draw[->,-stealth] (z1) -- (y1);
\draw[->, -stealth] (z2) -- (y2);
\draw[->, -stealth] (z3) -- (y3);
\draw[->,-stealth] (a1) -- (z1);
\draw[->, -stealth] (a2) -- (z2);
\draw[->, -stealth] (a3) -- (z3);
\draw[->, -stealth] (z1) -- (z2);
\draw[->, -stealth] (z2) -- (z3);
\end{tikzpicture}
\caption{\looseness-1 \textbf{Example DAG with intervention targets and surrogate latents.} The chain graph over $\bz$ implies ${z_1\independent z_3 \mid z_2}$
and $z_j \independent a_{k} \mid z_{j-1}$ for $k\neq j$. Yet, due to measurement error, 
${y_1\not\independent y_3 \mid y_2}$
and $y_j \not\independent a_l \mid \yb_{-j}$ for $l\leq k$.
This illustrates why $\nabla \log f_{\ba}(\by)$ is generally dense.
}
\label{fig:true_chain_dag}
\end{figure}

Besides generally enforcing sparsity, 
the specific choice of arguments to $s_j$ on the RHS of~\cref{s-function} is informed by adherence to the causal generative process.  Under the assumed interventional SCM priors~(\Cref{ass:SCM-prior}), each latent $z_j$ depends only on $\zb_{\pa(j
)}$ and $a_j$.  However, this factorization generally need not hold for the marginal distribution of the Tweedie estimates $\{\widehat \zb_{ei}\}_{i\in[N_e]}$ from \eqref{eq:Tweedie} as $N_e\to\infty$, even in the noiseless case
, since \eqref{Markov-score} also includes contributions from children, their intervention targets, and other parents. 

If we view $y_j$ as a noisy surrogate of~$z_j$, it therefore seems natural to posit that the estimator $\widehat z_j$ derived from~\eqref{eq:tweedie_reduction} should depend only on $(y_j,\yb_{\pa(j; \cG)},a_j)$.
If $\cG=\cG^\star$, this restriction guarantees that, as $N_e\to\infty$, the  empirical distribution $\frac{1}{N_e} \sum_{i = 1}^{N_e} \delta_{\widehat z_{eij}}$ of the Tweedie estimates $\{\widehat z_{eij}\}_{i\in[N_e]}$ in \Cref{eq:Tweedie} depends only on
~$(z_j,\zb_{\pa(j)},a_j)$. 
Moreover, the approximation 
$[s(\cdot,\ba)]_j =s_j(\cdot,a_j)$ becomes accurate in the large-sample regime, as discussed in~\Cref{app-sparse-intervention}.

\paragraph{Decoupled estimation.}
The following result characterizes the
structure for the minimizer of the score-matching loss.
\begin{restatable}
{theorem}{solutionstructure}
\label{thm:score-matching-minimizer}
 For all solutions $s^\star \in\argmin_{s\in\cS}\mathbf{L}^{\cG}(s)$ 
 and all
 $j\in[d_Z]$, the $j$\textsuperscript{th} component $s^\star_j$ of $s^\star$ minimizes
\begin{equation}\label{eq:score-matching-j}
    \sum_{e=1}^M  \mathbb{E}_{\by\sim {f}_{\ba_e}}
    \!\left[
    \left| s_j\left(y_j,\yb_{\pa(j)}, a_{ej}\right)\right|^2
    \!+2\partial_j s_j\left(y_j, \yb_{\pa(j)}, a_{ej}\right)
    \right]\!.
\end{equation}
\end{restatable}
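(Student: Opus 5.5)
The plan is to expand the squared $L_2$ loss in \cref{eq:L2_score_matching_loss} component-wise and then apply Hyv\"arinen's integration-by-parts trick to each component.

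\textbf{Step 1: split the loss by coordinate.} The squared Euclidean norm is a sum over coordinates, so
\begin{equation*}
\mathbf{L}^{\cG}(s)=\sum_{j=1}^{d_Z}\sum_{e=1}^M \mathbb{E}_{\by\sim f_{\ba_e}}\Big[\big(s_j(y_j,\yb_{\pa(j)},a_{ej})-\partial_j\log f_{\ba_e}(\by)\big)^2\Big]=:\sum_{j=1}^{d_Z}\mathbf{L}_j(s_j).
\end{equation*}
By \cref{s-function}, the $j$\textsuperscript{th} component of a causal score function is a free function $s_j$ of its own arguments $(y_j,\yb_{\pa(j;\cG)},a_j)$. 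I take the class $\cS$ to be a product $\cS=\prod_j\cS_j$, meaning each $s_j$ ranges independently over its own class. Then $\mathbf{L}^{\cG}$ is a separable sum. Any global minimizer $s^\star$ must have each $s_j^\star$ minimizing $\mathbf{L}_j$. Otherwise, replacing that single component by a better one would strictly lower the total loss while keeping the others fixed, which contradicts optimality.

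\textbf{Step 2: rewrite each $\mathbf{L}_j$ without the unknown score.} Expand the square in $\mathbf{L}_j$. The term $\mathbb{E}[(\partial_j\log f_{\ba_e})^2]$ does not depend on $s$, so it can be dropped. The cross term is
\begin{equation*}
-2\,\mathbb{E}_{f_{\ba_e}}\big[s_j\,\partial_j\log f_{\ba_e}\big]=-2\int s_j(y_j,\yb_{\pa(j)},a_{ej})\,\partial_j f_{\ba_e}(\by)\,d\by .
\end{equation*}
I integrate by parts in the variable $y_j$, holding all other coordinates fixed. The ones in $\yb_{\pa(j)}$ are not differentiated because $j\notin\pa(j)$ by acyclicity. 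This turns the cross term into $+2\,\mathbb{E}_{f_{\ba_e}}[\partial_j s_j]$, where $\partial_j$ is the derivative with respect to the first argument $y_j$. Summing over $e$ then gives exactly \cref{eq:score-matching-j}, up to an additive constant independent of $s_j$. So $s_j^\star$ minimizes \cref{eq:score-matching-j}.

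\textbf{Main obstacle: justifying the integration by parts.} The boundary terms $s_j f_{\ba_e}\big|_{y_j\to\pm\infty}$ must vanish, and the relevant expectations must be finite. On the density side this is favorable. Since $f_{\ba_e}$ is the law of $\bD\bz$ convolved with the Gaussian $\cN(0,\sigma^2\Ib)$, it is smooth and strictly positive, with Gaussian-type tails in each coordinate. For $s_j$ I will impose the standard regularity conditions of \citet{hyvarinen2005estimation} as part of the definition of $\cS$: differentiability in $y_j$, finite second moments of $s_j$ and $\partial_j s_j$ under $f_{\ba_e}$, and $s_j f_{\ba_e}\to0$ as $|y_j|\to\infty$. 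Under these conditions, Fubini's theorem lets me perform the one-dimensional integration by parts inside the integral over the remaining coordinates. If $\cS$ were not a product class, Step 1 would fail, so I would state that assumption explicitly.
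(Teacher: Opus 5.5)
Your proposal is correct and follows essentially the paper's route: the paper applies Hyv\"arinen's integration-by-parts identity to the whole vector field and then writes $\mathrm{tr}(\nabla_{\by}s)=\sum_j\partial_j s_j$ to separate over $j$, while you separate first and integrate by parts in $y_j$ coordinate by coordinate. Your explicit product-class assumption on $\cS$ and the replace-one-component argument spell out the decoupling step the paper leaves implicit; one minor correction is that $f_{\ba_e}$ need not have Gaussian-type tails, since it inherits the tails of $\bD\bz$. This is harmless, because you impose $s_j f_{\ba_e}\to 0$ directly, and $\mathbb{E}[(\partial_j\log f_{\ba_e})^2]\le 1/\sigma^2$ holds for any prior by Tweedie's formula and Jensen's inequality, so the dropped constant is always finite.
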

\Cref{thm:score-matching-minimizer} provides a key structural insight into the causal score-matching problem: The optimization over $s$ decouples across the components $\{s_j\}_{j=1}^{d_Z}$. To learn the causal score, it thus suffices to solve $d_Z$ smaller score-matching problems. 

\paragraph{Empirical considerations.}
In practice, we only have the finite sample from~\eqref{eq:data} and do not know $\Ab$ (nor $\yb)$.
At each iteration of~\cref{alg:iterative-SM}, we therefore use the current estimate $\widehat{\Ab}=\widehat\Ob\widehat\Db$ of~$\bA$ to form the pseudo-latents $\by_{ei}=\widehat\bO^\top \bx_{ei}$ and estimate each $\widehat s_j$ by minimizing the empirical version~$\widehat{\Lb}_j^{\cG}$ of the loss in~\cref{eq:score-matching-j} obtained by
replacing the expectation
by an empirical average over $\{\yb_{ei}\}_{i=1}^{N_e}$:
\begin{equation} \label{score-matching}
\widehat s_j \in \arg\min_{s_j} \widehat{\bL}^{\cG}_j(s_j).
\end{equation}
Once we obtain the estimates $\{\widehat s_j\}_{j\in[d_Z]}$, the full score estimate $\widehat s$ is a $d_Z$-dimensional vector field such that
\begin{equation*} 
     [\widehat s(\by, \ba)]_j = \widehat s_j\!\left(y_j,\yb_{\pa(j)},a_j\right).
\end{equation*}

\paragraph{Estimation with cubic splines.}
In practice, each score component $s_j$ is parameterized by a nonparametric function class \citep{Tsybakov2009}. An example is the family of cubic splines, which yields a simple closed-form solution (see \Cref{app-score-cubic-splines}). The partial derivatives are computed efficiently, either in closed form or via automatic differentiation. 

\section{\texorpdfstring{$\rmg$}{g}-modeling and related work}
\label{sec:EB_g_modeling}
While our main focus is on the $f$-modeling approach developed in~\cref{sec:EB_estimation_inference,sec:causal_score_matching},  we now sketch $\rmg$-modeling for our setting and use this perspective to discuss prior works, many of which (implicitly) fall into this category. 

As discussed in~\cref{sec:EB_estimation_inference}, $\rmg$-modeling explicitly models the priors $\rmp^\star(\zb\mid\ab_e)$ via a parametric family  ${\{\rmp_\gammab(\zb\mid\ab_e):\gammab\in\Gamma\}}$ and aims to jointly infer the parameters $\thetab:=(\Ab,\sigma^{2},\gammab)$ by maximizing the log marginal likelihood
\begin{equation}
\begin{aligned}
\label{eq:log_marginal_likelihood}
\log \rmp_{\thetab} \left( \cD\right)&=\sum_{e = 1}^M \sum_{i = 1}^{N_e} \log \rmp_{\thetab}(\xb_{ei}\mid\ab_e),\\
\text{where} \quad 
\rmp_{\thetab} (\xb\mid\ab)&=\int\rmp_{\bA,\sigma^2}(\bx \mid \bz) \rmp_\gammab(\zb\mid\ab) d\zb.
\end{aligned}
\end{equation}
\Cref{eq:log_marginal_likelihood} is difficult to optimize exactly because it involves intractable integrals.
However, we can use variational inference (VI) to approximate it~\citep{blei2017variational}.
 For any variational posterior $\rmq_{ei}\in \cP(\R^{d_Z})$ over~$\zb_{ei}$, 
 we can bound the marginal likelihood by the \emph{evidence lower bound} (ELBO), 
\begin{multline*}
\label{eq:lower_bound_to_likelihood}
 \log \rmp_{\thetab} \left(\xb_{ei}\mid\ab_e\right)
 \geq
 \ELBO(\thetab,\rmq_{ei};\xb_{ei},\ab_e):=\\
\EE{\rmq_{ei}(\bz)}{\log \rmp_{\bA,\sigma^2}(\bx_{ei} \mid \bz)} 
- \KL{\rmq_{ei}(\bz)}{\rmp_\gammab(\bz \mid \ba_e)}. 
\end{multline*}
\looseness-1 The variational posteriors often come from a parametric family $\{\rmq_{\phib}(\bz):\phib\in\Phi\}$.
Provided $\rmp_\gammab$ is differentiable w.r.t.~$\gammab$,
black-box VI~\citep{ranganath2014black} or amortized VI with reparametrization trick~\citep{kingma2014auto,rezende2014stochastic}  yield differentiable objectives amenable to gradient-based optimization w.r.t.\ both $\thetab$ and $\phib$. 
The learned variational posterior is then used as approximation of~\eqref{target-posterior}.

\looseness-1 For a single domain and a parameter-free standard Gaussian prior, $p_\gammab(\zb)=\Ncal(\bm0, \Ib)$, the above reduces to a vanilla (linear) VAE~\citep{kingma2014auto} or probabilistic PCA~\citep{Tipping1999PPCA}.
More expressive choices for $\rmp_\gammab(\zb)$ include, e.g., a mixture of Gaussians
~\citep{dilokthanakul2016deep}, whose parameters~$\gammab$ are learned jointly with the measurement $(\bA, \sigma^2)$ and variational $\phib$ parameters, in the spirit of $\rmg$-modeling. 
Alternatively, the prior can be expressed directly in terms of the variational posterior~\citep{tomczak2018vae}, which leverages the fixed-point identity for $\rmp^\star(\bz\mid\ba)$ implied by \Cref{assump:causal-eb}.

\looseness-1 If we observe multiple domains and do not impose additional structure on $\rmp_\gammab(\zb\mid\ab)$, we obtain a conditional VAE~\citep{sohn2015learning}.
The iVAE~\citep{khemakhem2020variational} is a conditional VAE in which $\rmp_\gammab(\zb\mid\ab)$ is factorized and from an exponential family whose natural parameters depend on~$\ab$. 
It is identifiable if the set of observed domains~$\Acal$ is sufficiently diverse.
For single-cell data, perturbations $\ab_e$ are commonly modeled at the level of $\rmp_\gammab(\zb\mid\ab_e)$ as additive mean shifts~\citep{lotfollahi2023predicting,bereket2023modelling,von2025representation}.

\looseness-1 
To parametrize the interventional prior from~\cref{ass:SCM-prior}, let
\begin{equation}
\label{eq:causal_prior_g_modeling}
z_{eij} = \gamma_j\big(a_{ej},\, \bz_{ei\pa(j)},\,  u_{eij}\big), 
\,\,\, u_{eij}\iid \textsc{Unif}[0,1],
\end{equation}
where each $\gamma_j$ is a nonlinear function that models the causal mechanism from~\eqref{intervention mechanism} as $\gamma_j(0,\cdot,\cdot)\!=\!f_j(\cdot,\cdot)$ and $\gamma_j(1,\cdot,\cdot)\!=\!h_j(\cdot,\cdot)$. Several CRL methods amount to some form of $\rmg$-modeling with a prior as in~\eqref{eq:causal_prior_g_modeling}~\citep{brehmer2022weakly,zhang2024identifiability,yang2021causalvae,von2023nonparametric}.

\begin{figure*}[!tbp]
\centering
\begin{subfigure}[t]{0.32\linewidth}
    \centering
    \captionsetup{labelformat=empty}
    \includegraphics[width=\linewidth]{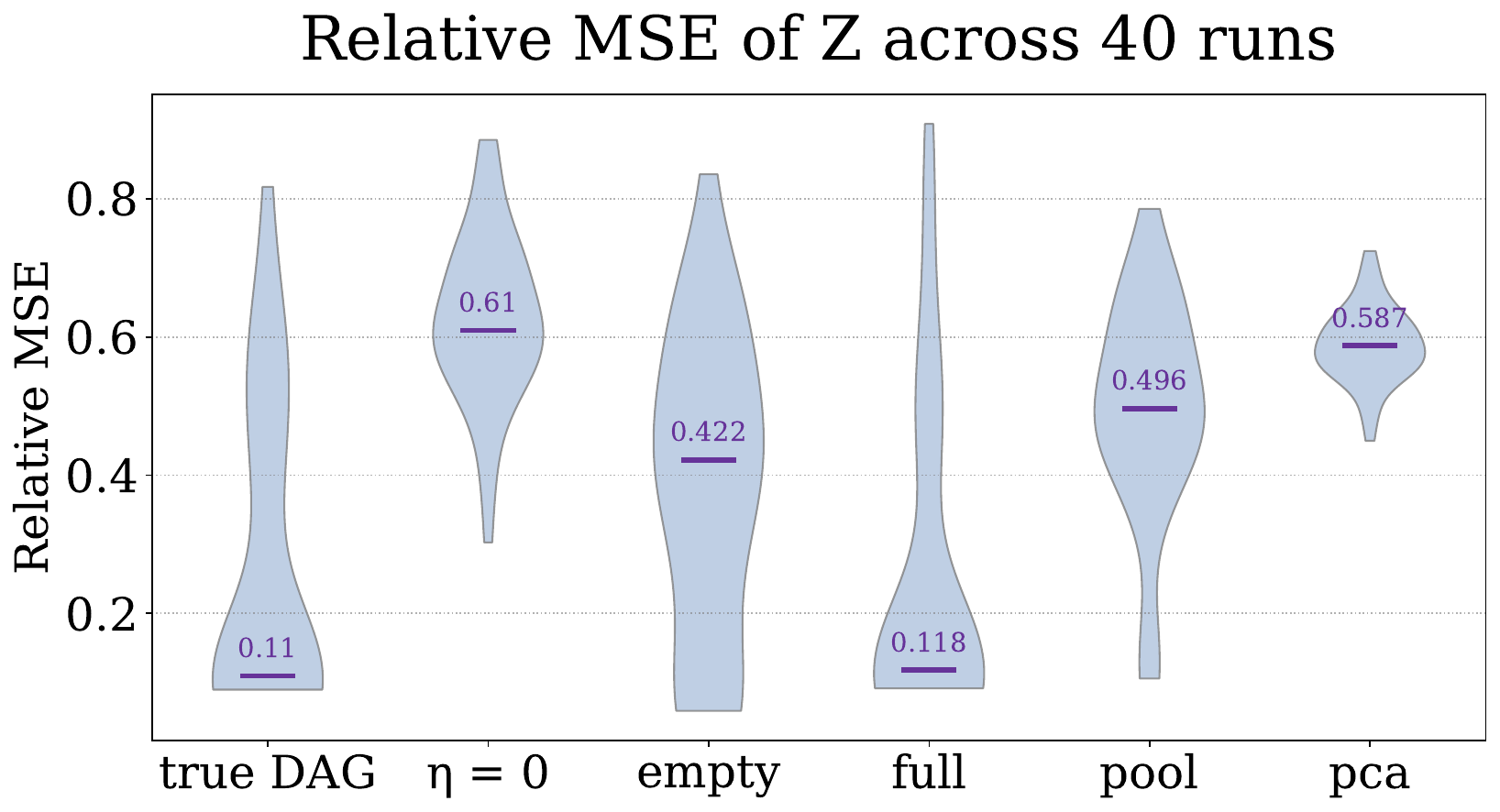}
    \vspace{0.4em}
    
    \captionsetup{labelformat=empty}
    \includegraphics[width=\linewidth]{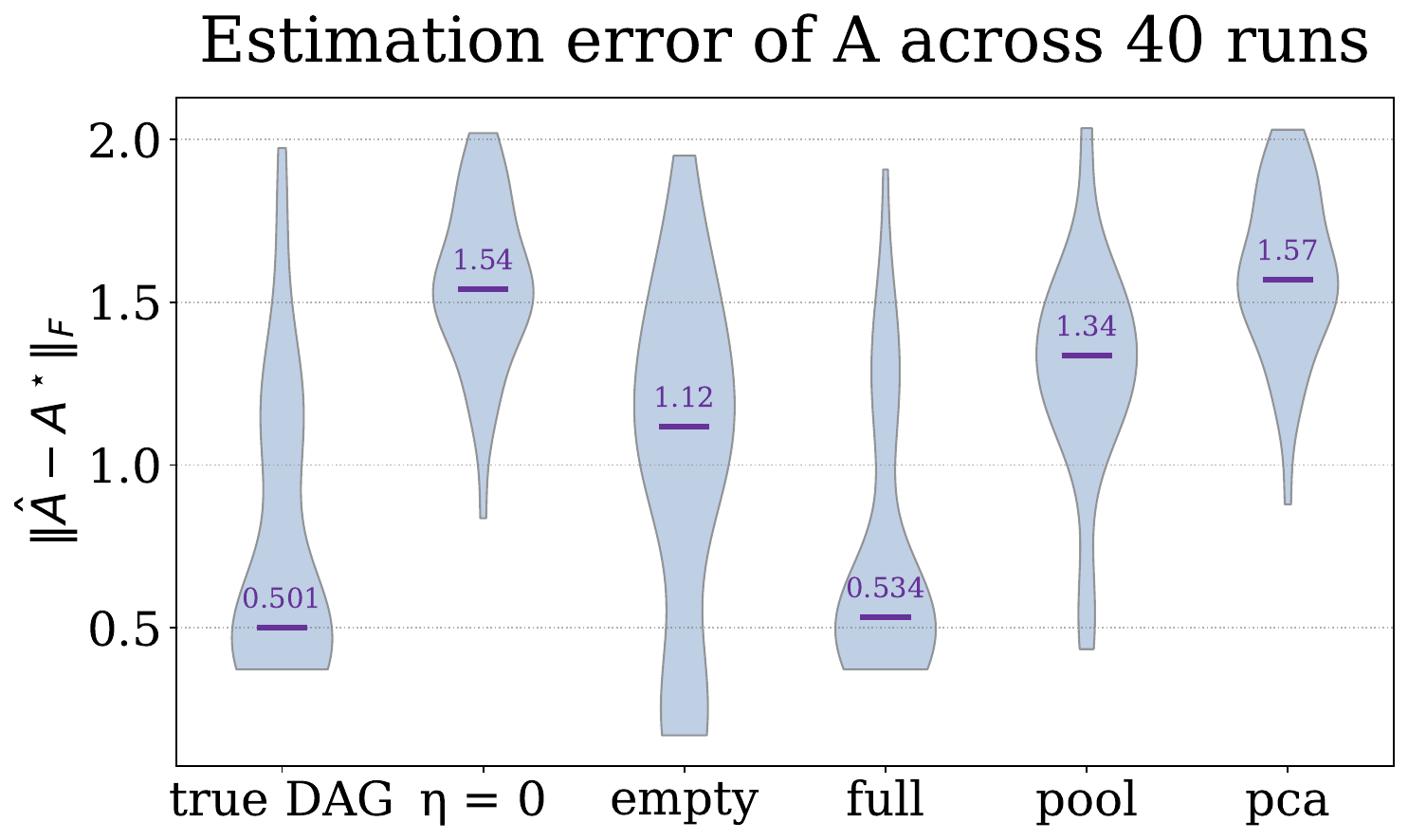}
\end{subfigure}\hfill
\begin{subfigure}[t]{0.33\linewidth}
    \centering
    \captionsetup{labelformat=empty}
    \includegraphics[width=\linewidth]{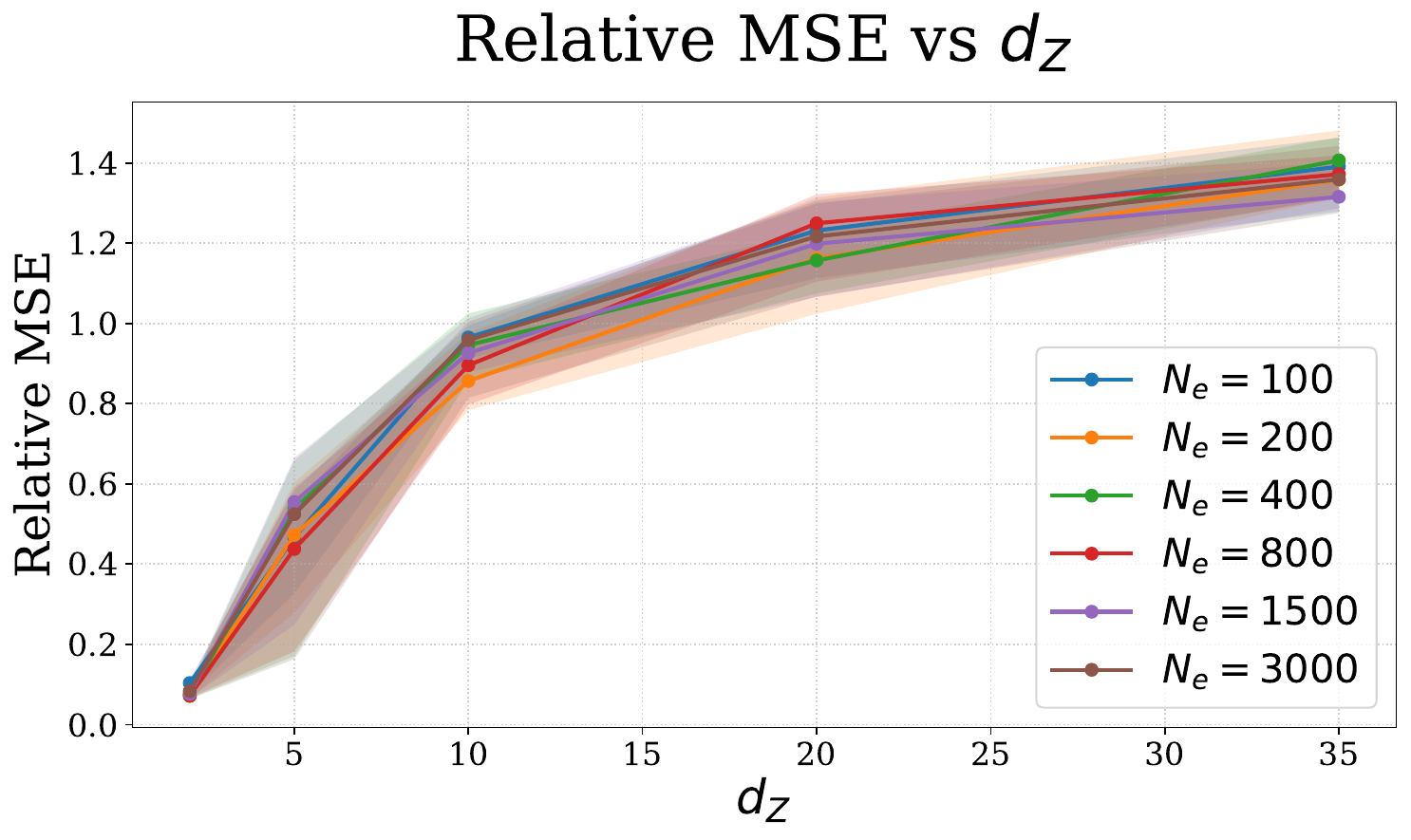}
    \vspace{0.4em}

    \captionsetup{labelformat=empty}
    \includegraphics[width=\linewidth]{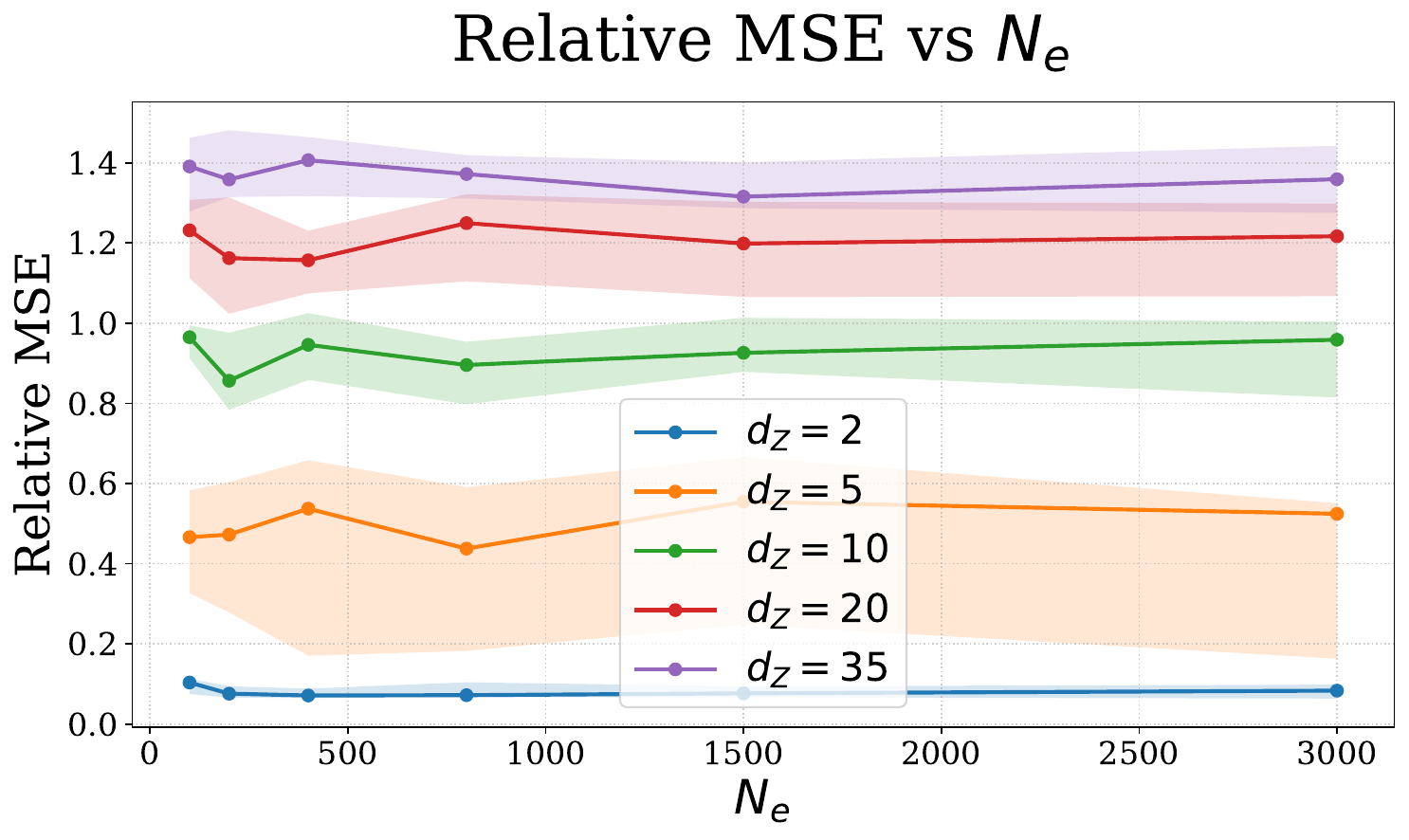}
\end{subfigure}\hfill
\begin{subfigure}[t]{0.32\linewidth}
    \centering
    \captionsetup{labelformat=empty}
    \includegraphics[width=\linewidth]{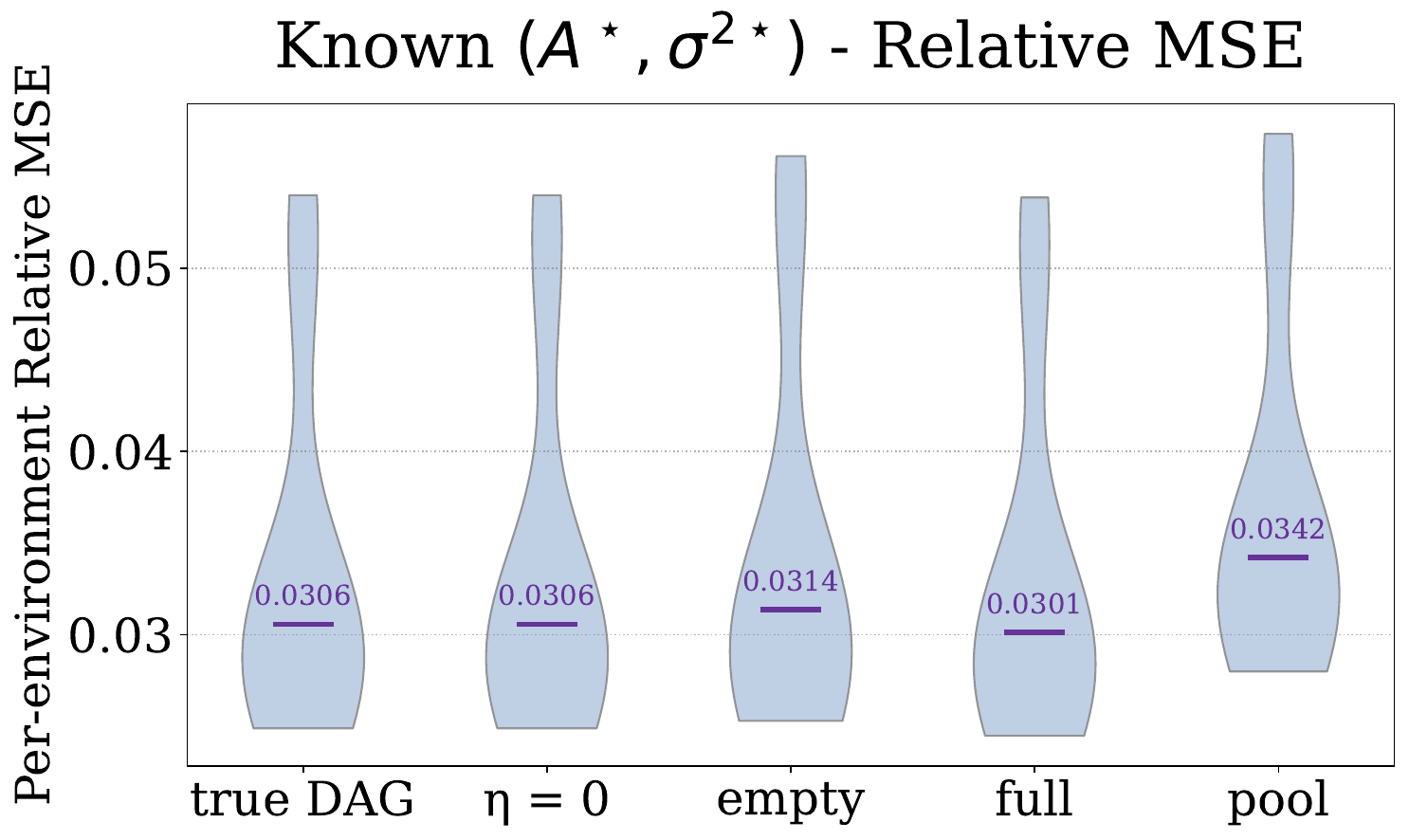}
    \vspace{0.4em}
    
    \captionsetup{labelformat=empty}
    \includegraphics[width=\linewidth]{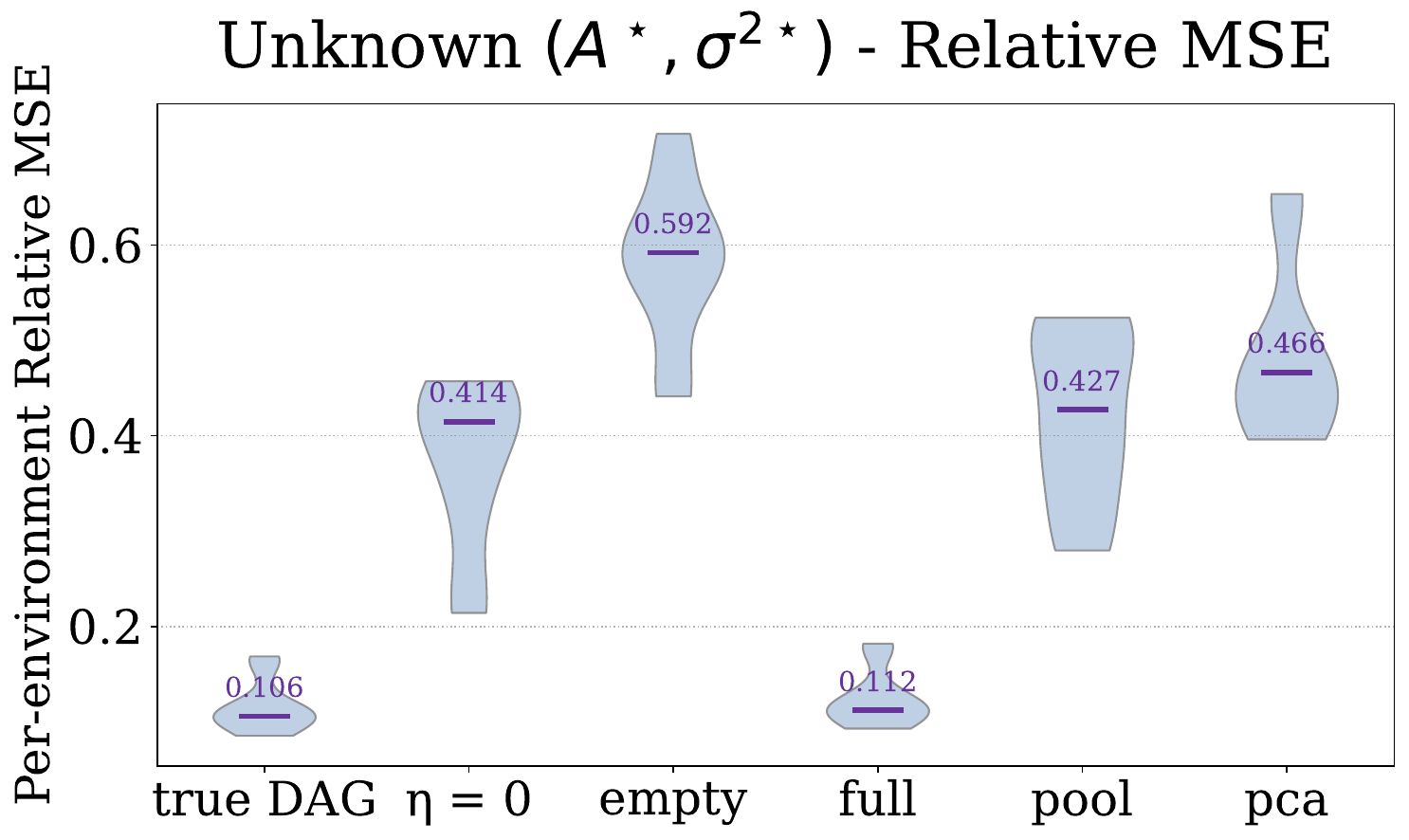}
\end{subfigure}
\caption{
\textbf{Empirical performance of CRL $f$-modeling.}
\textit{(Left column:)} relative MSE (top) and Frobenius error of $\widehat\bA$  (bottom). 
\textit{(Middle column:)} scaling of relative MSE with $d_Z$ (top) and with $N_e$ (bottom) across $20$ runs, with $d_X = 100$. Solid curves show the median across runs and shaded bands show the interquartile range.
\textit{(Right column:)} per-environment relative MSE in the oracle setting (top; known $\bA^\star,\sigma^{2\star}$) and in the learned setting (bottom; unknown $\bA^\star,\sigma^{2\star}$). 
}
\label{fig:main-results}
\end{figure*}

\section{Empirical Studies}
\label{sec:experiments}
Our empirical studies evaluate the performance of CRL $f$-modeling, where the aim is to recover the causal variables and the mixing matrix from multiple interventional datasets with known causal graph and intervention targets.

\paragraph{Data.} We
generate synthetic data from an SCM over $d_Z=4$ latent variables arranged in a chain $\bz_1 \to \bz_2 \to \bz_3 \to \bz_4$. 
For each node $j\in[d_Z]$, the structural equation is
\begin{equation}
\label{eq:SCM_synthetic}
    z_j := \sum_{k\in\pa(j)} w_{jk}\,g(z_k) + u_j,
\end{equation}
\looseness-1 with $w_{jk}\!\iid\!\textsc{Unif}[-1,1]$, noise $u_j \!\iid\!\cN(0,\sigma_z^2)$ with ${\sigma_z^2\!=\!4}$, and nonlinearity $g(z)\!=\!\tanh(\kappa z)\!+\!(\kappa z)^3$ with $\kappa\!=\!3$.
\looseness-1  Each domain results from a single-node
intervention with targets encoded by a one-hot vector $\ba\in\{0,1\}^{d_Z}$ such that an intervention on $z_j$ replaces the corresponding structural equation from~\eqref{eq:SCM_synthetic} with $z_j:=\tilde{u}_j\sim \cN(10,1)$.
The latents $\bz\!\in\!\R^{d_Z}$ are mapped to observations $\bx\!\in\!\R^{d_X}$ with $d_X\!=\!100$ via the linear measurement model from~\eqref{eq:multi_domain_linear_measurements}, where
 $\bA^\star\!\in\!\cO^{d_X\times d_Z}$ is a random column-orthonormal matrix and $\sigma^{2\star}\!=\!2$. 

\paragraph{Baselines.} We compare the proposed CRL $f$-modeling approach with versions thereof which use no shrinkage (i.e., $\eta=0$ in Tweedie's formula) or which incorrectly assume an empty or complete  (w.r.t.\ the true causal order) causal graph. We also consider baselines that ignore the interventional multi-domain structure (i.e., setting $\ab_e=\bm 0$ for all $e$) and apply either CRL $f$-modeling or PCA on the pooled data.

\paragraph{Metrics.}
\looseness-1
We consider two metrics to assess estimation accuracy. The \emph{relative MSE} (RelMSE) normalizes the squared estimation error by the norm of the true latent:
\begin{equation}\label{eq:rel-mse}
\mathrm{RelMSE}(\widehat\bz,\bz^\star)
:=\frac{\|\widehat\bz-\bz^\star\|_2^2}{\|\bz^\star\|_2^2}.
\end{equation}
In our analysis, we report RelMSE~\eqref{eq:rel-mse} averaged within a run, or within each environment.

The second metric we report is the Frobenius error of $\widehat \bA$: 
\begin{equation}\label{eq:Aerr}
\left\|\widehat\bA - \bA^\star\right\|_F^2
= \sum_{i=1}^{d_X}\sum_{j=1}^{d_Z}\left(\widehat\bA_{ij}-\bA^\star_{ij}\right)^2.
\end{equation}
Here $\widehat\bA$ is the final estimate of \Cref{alg:iterative-SM} after aligning the signs and permutation of its columns to best match $\bA^\star$. This follows from the identifiability results in \Cref{sect-identifiability} that $\bA^\star$ is identifiable up to signed permutations of its columns.

\paragraph{Code.} A Python implementation of our CRL $f$-modeling algorithm~(\cref{alg:iterative-SM}) and code to reproduce our results are available at:
\href{https://github.com/bohanwu2000/EB-CRL}{\texttt{github.com/bohanwu2000/EB-CRL}}.

\paragraph{Results.}
The left column of \Cref{fig:main-results}  reports the two metrics across $40$ runs. The main takeaways are as follows: 
\begin{enumerate}[leftmargin=*]
    \item CRL $f$-modeling with the true DAG (\textsc{true DAG}) and the complete DAG (\textsc{full}) outperform the other methods. The latter performs well, presumably because it flexibly approximates the true score (see the discussion in \Cref{sec:causal_score_matching}).
    \item CRL $f$-modeling with an empty graph (\textsc{empty}) is less stable across runs: misspecification leads to underperformance when effects of causal parents are strong. 
    \item CRL $f$-modeling on pooled data without intervention information (\textsc{pool}) performs better than no shrinkage ($\eta=0$) but worse than other EB methods. 
    \item As baselines, PCA on pooled data and \Cref{alg:iterative-SM} with no shrinkage ($\eta = 0$) perform worse than all EB methods. 
\end{enumerate}
The middle column of \Cref{fig:main-results} reports the relative MSE of CRL $f$-modeling with the true DAG as $d_Z$ and $N_e$ vary. The relative MSE increases approximately like $O(\sqrt{d_Z})$ in $d_Z$ and stabilizes once $N_e \ge 200$.
The computational cost of CRL $f$-modeling with the true DAG scales like $O\left(\exp\big(\max_{j \in [d_Z]}|\pa(j)|\big)\right)$, while CRL $f$-modeling with the complete DAG becomes challenging for large $d_Z$ since it scales as $O(\exp(d_Z))$; see \Cref{app-score-cubic-splines} for details.

The right column of \Cref{fig:main-results} zooms in on per-environment performance by reporting results in both the oracle setting (where $\bA^\star$ and $\sigma^{2\star}$ are known and held fixed) and the learned setting (where $\bA$ and $\sigma^2$ are learned). All empirical Bayes methods perform well in the oracle setting. In the learned setting, CRL $f$-modeling with the true DAG achieves the lowest error overall and remains relatively uniform across environments. 
Further details of the simulation study are relayed to \Cref{simulation-details}.

\section{Discussion}
\label{sec:discussion}
\looseness-1 
We proposed an empirical Bayes $f$-modeling approach to multi-domain causal representation learning that addresses the simultaneous inference of latent causal variables under a noisy measurement model. The main EM algorithm performs iterative Tweedie updates with score matching. The empirical results show accurate performance. 

When the true DAG is small or when the DAG is unknown but a causal order is known, it may be safer to run CRL $f$-modeling with the complete DAG induced by that order, since it is well specified for the true score function. When the DAG is large, CRL $f$-modeling with the true DAG is typically the more scalable and accurate choice.

 For nonlinear measurement models, $f$-modeling is less directly applicable because of the difficulty to adapt Tweedie's formula. On the other hand, $\rmg$-modeling becomes more natural since it extends directly to nonlinear or learned decoders $\rmp_{\thetab}(\bx\mid\bz)$, including deep networks.

While our algorithm is rather focused on a particular scenario, we believe the framework of causal empirical Bayes to be more broadly applicable to other algorithms for other model classes. Extensions include richer intervention structures $\rmp(\bz\mid\ba)$ with non-binary $\ba$ (e.g., additive shifts) and relaxing the assumption of known graphs or targets by using the optimal EB objectives for graph/target discovery.

\begin{acknowledgements}
The authors thank Jonas Peters for insightful discussions during the conception of this work. JvK acknowledges support from The Branco Weiss Fellowship---Society in Science.
\end{acknowledgements}

\bibliography{bib}

\newpage

\onecolumn

\title{Supplementary Material}
\maketitle
\vspace{-4em}
\appendix

\section{Details of the \texorpdfstring{$f$}{f}-modeling Algorithm} \label{app-alg-details}
In this appendix, we fill in the detailed derivation of \Cref{alg:iterative-SM}.   The full procedure is a variational EM algorithm \citep{wainwright2008graphical}.  

The marginal likelihood of $(\bA,\sigma^2)$ is
\begin{equation*}
\sum_{e = 1}^M \sum_{i = 1}^{N_e} \log \rmp_{\bA,\sigma^2}^\star(\bx_{ei} \mid \ba_e)
= \sum_{e = 1}^M \sum_{i = 1}^{N_e} \log \int_{\R^{d_Z}} \rmp_{\bA,\sigma^2}(\bx_{ei} \mid \bz)\,\rmp^\star(\bz \mid \ba_e)\,\dd \bz .
\end{equation*}

The log-likelihood is difficult to optimize because it involves intractable integrals, so we approximate it using variational inference. In the E-step, we lower bound the log-likelihood by an ELBO. For any $\rmq_{ei}\in \cP(\R^{d_Z})$,
\begin{equation*}
\begin{aligned}
\log \rmp_{\bA,\sigma^2}^\star(\bx_{ei} \mid \ba_e)
&\geq \EE{\rmq_{ei}(\bz_{ei})}{\log \rmp_{\bA,\sigma^2}(\bx_{ei} \mid \bz_{ei})}
- \KL{\rmq_{ei}(\bz_{ei})}{\rmp^\star(\bz \mid \ba_e)} \\
&= -\frac{1}{2\sigma^2}\,\EE{\rmq_{ei}(\bz_{ei})}{\|\bx_{ei}-\bA\bz_{ei}\|^2}
-\frac{d_X}{2}\log\sigma^2
- \KL{\rmq_{ei}(\bz_{ei})}{\rmp^\star(\bz \mid \ba_e)} \\
&= -\frac{1}{2\sigma^2}\,\EE{\rmq_{ei}(\bz_{ei})}{\|\bx_{ei}\|^2 + \bz_{ei}^\top \bD^2 \bz_{ei} - 2\,\bx_{ei}^\top \bA\bz_{ei}}
-\frac{d_X}{2}\log\sigma^2
- \KL{\rmq_{ei}(\bz_{ei})}{\rmp^\star(\bz \mid \ba_e)} \\
&=: \ELBO^\star(\rmq_{ei},\bA,\sigma^2).
\end{aligned}
\end{equation*}
Then we design an EM algorithm which aims to iteratively maximize the objective $\sum_{e = 1}^M \sum_{i = 1}^{N_e} \ELBO^\star(\rmq_{ei}, \bA, \sigma^2)$. 

\textbf{Update $\bz$: } For given $\bA$ and $\sigma^2$, the distribution $\rmq_{ei}$ that maximizes the ELBO is the posterior
\begin{equation*}
    \rmq_{ei}^\star(\bz_{ei}) \propto \rmp_{\bA,\sigma^2}(\bx_{ei} \mid \bz_{ei})\,\rmp^\star(\bz_{ei} \mid \ba_e).
\end{equation*}
A direct calculation shows that, for updating $\bA$ and $\sigma^2$ in the subsequent M-step, it suffices to keep track of the first and second moments of $\rmq_{ei}^\star$. To approximate these moments, we use the first- and second-order Tweedie formulas.

The score-matching problem can be expressed in terms of the diagonal score components $\left(\widehat s_j\right)_{j \in [d_Z]}$:
\begin{equation} \label{score-matching-jk}
\widehat s_j \in \argmin_{s_j} \sum_{e = 1}^M \sum_{i = 1}^{N_e}\Bigg[ \partial_j s_j\left(y_{eij}, y_{ei\pa(j)}, \ba_e \right)+ \frac{1}{2} s_j\left(y_{eij}, y_{ei\pa(j)}, \ba_e \right)^2 \Bigg]. 
\end{equation}
The resulting score estimate is given by $[\widehat s(\by,\ba)]_j=\widehat s_j(\by_j,\by_{\pa(j)},\ba)$, or, under the sparse-intervention approximation, $[\widehat s(\by,\ba)]_j=\widehat s_j(\by_j,\by_{\pa(j)},\ba_j)$, for $j\in[d_Z]$.

Given an estimate $\widehat\bD$ of the diagonal matrix $\bD$, we update the empirical Bayes estimate of the latent variable $\bz_{ei}$ using a damped version of Tweedie's formula:
\begin{equation*}
    \widehat{\bz_{ei}} = \widehat \bD^{-1} \left(\by_{ei} + \eta \sigma^2 \widehat{s}(\by_{ei}, \ba_e) \right), 
\end{equation*}
where $\eta\in(0,1)$ is a damping factor.

The second-order Tweedie's formula yields
\begin{equation*}
    \widehat{z_{eij}^2} = \widehat{z_{eij}}^2+ \frac{\widehat \sigma^2}{\widehat \bD_{jj}^2} +  \frac{ \widehat \sigma^4}{\widehat \bD_{jj}^2} \partial_j \widehat s_j(y_j, \yb_{\pa(j)}, \ba_e), \quad j \in [d_Z]. 
\end{equation*}

\textbf{Update $\bA$: } 
In the M-step, we maximize the ELBO over $\bA$, equivalently
\begin{equation}
        \widehat \bA  = \arg \min_{\bA \text{ orthogonal}} \sum_{e = 1}^M  \sum_{i = 1}^{N_e} \EE{\rmq(\bz_{ei})}{\bz_{ei}^\top \bD^2 \bz_{ei}- 2 \bx_{ei}^\top \bA \bz_{ei}}.
\end{equation}
This optimization depends on the posterior mean and second moments under $\rmq$. A natural choice is to take $\EE{\rmq}{\bz_{ei}}$ as the Tweedie estimate $\widehat{\bz_{ei}}$, and $\EE{\rmq}{\bz_{ei}^2} \in \R^{d_Z}$ as the second-order Tweedie estimate $\widehat{\bz_{ei}^2}$.

Define $\bM :=  \sum_{e = 1}^M \sum_{i=1}^{N_e} \bx_{ei} \widehat{\bz_{ei}}^\top$. The optimization problem reduces to
\begin{equation} \label{mle-A-tr}
\begin{aligned}
\widehat \bA
=\arg \min_{\bA} & \sum_{e = 1}^M \sum_{i=1}^{N_e} \left[\sum_{j = 1}^{d_Z} \bD_{jj}^2 \widehat{\bz_{eij}^2} - 2 \left\langle \bx_{ei}, \bA \widehat{\bz_{ei}} \right\rangle \right] \\
& = \sum_{e = 1}^M \sum_{i=1}^{N_e} \sum_{j = 1}^{d_Z} \bD_{jj}^2 \widehat{\bz_{eij}^2}  -2 \text{tr}\bigl(\bA \bM^\top \bigr).     
\end{aligned}
\end{equation}
where the second equality is due to 
\begin{equation*}
     \sum_{e = 1}^M \sum_{i=1}^{N_e} \left\langle \bx_{ei}, \bA \widehat{\bz_{ei}} \right\rangle = \sum_{e = 1}^M \sum_{i=1}^{N_e} \text{tr}\left(\bA \widehat{\bz_{ei}}  \bx_{ei}^\top \right) = \text{tr}\bigl(\bA\bM^\top \bigr). 
\end{equation*}
Now we parametrize $\bA = \bO \bD$ thanks to \Cref{ass:orthonormal-A}. 

Let $\bM = \bU \bm\Sigma \bV^\top$ be the singular value decomposition of $\bM$. Since the trace is cyclically invariant, the objective~\eqref{mle-A-tr} is equivalent to $\text{tr}\bigl(\bO \bD \bM^\top \bigr) =  \text{tr}\bigl(\bO \bV \bD \bm\Sigma^\top \bU^\top \bigr)$. By the orthogonal Procrustes solution \citep[§6.4.1]{golub2013matrix}, for a given $\bD$, the solution $\widehat \bO$ that maximizes \Cref{mle-A-tr} is given by
\begin{equation*}
    \widehat \bO = \bU \bV^\top \in \cO^{d_X \times d_Z}. 
\end{equation*}
After plugging $\widehat \bO$ into the reformulated objective, the optimal $\widehat \bD$ requires solving
\begin{align*}
\widehat \bD
& =\arg \min_{\bD}
\sum_{e = 1}^M \sum_{i=1}^{N_e} \sum_{j = 1}^{d_Z} \bD_{jj}^2 \widehat{\bz_{eij}^2} - 2\text{tr}\bigl(\bU \bD \bm\Sigma^\top \bU^\top \bigr)  \\
& = \sum_{e = 1}^M \sum_{i=1}^{N_e} \sum_{j = 1}^{d_Z} \bD_{jj}^2 \widehat{\bz_{eij}^2} - 2 \text{tr}\bigl(\bD \bm\Sigma^\top\bigr). 
\end{align*}
The problem reduces to solving a quadratic function for each entry of $\bD$. The optimizer is explicitly given by $\widehat \bD_{jj} = \bm\Sigma_{jj} / \sum_{e = 1}^M \sum_{i=1}^{N_e} \widehat{ \bz_{eij}^2}$ for $j \in [d_Z]$.

\textbf{Update $\sigma^2$. } Finally, we freeze $\{\widehat{\bz_{ei}}\}$, $\{\widehat{\bz_{ei}^2}\}$, and $\widehat \bA$, and maximize the ELBO with respect to $\sigma^2$. The ELBO is concave in $1/\sigma^2$, so setting the derivative to zero yields the unique maximizer
\begin{equation*}
    \widehat \sigma^2
    = \frac{1}{d_X \sum_{e=1}^M N_e}\sum_{e = 1}^M \sum_{i=1}^{N_e}
    \left(
    \|\bx_{ei}\|_2^2
    + \sum_{j = 1}^{d_Z} \widehat \bD_{jj}^2\,\widehat{\bz_{eij}^2}
    - 2\,\bx_{ei}^\top \widehat \bA \widehat{\bz_{ei}}
    \right).
\end{equation*}
\subsection{Score modeling with cubic splines} \label{app-score-cubic-splines}
To estimate the diagonal score components $\{s_j\}_{j\in[d_Z]}$, we use a simple spline-based nonparametric model \citep{Tsybakov2009} that admits closed-form updates. Fix $j\in[d_Z]$ and let $x_{eij}:=(y_{eij},y_{ei\pa(j)})\in\R^{|\pa(j)|+1}$ denote the input. We model
\begin{equation*}
s_j(y_{eij},y_{ei\pa(j)},a_{ej})
=\phi_j(x_{eij})^\top \theta_{j,a_{ej}},
\end{equation*}
where $\phi_j:\R^{|\pa(j)|+1}\to\R^m$ is a tensor-product cubic B-spline feature map with a fixed knot sequence on each coordinate, and $\theta_{j,a}\in\R^m$ are intervention-specific coefficients for $a\in\{0,1\}$. In particular, if we use $K$ basis functions per coordinate, then the tensor-product construction yields $m=K^{|\pa(j)|+1}$ features, so both memory and computational complexity grow exponentially in the input dimension $|\pa(j)|+1$. 

Write $\Psi_j(x):=\partial_{y_j}\phi_j(x)\in\R^m$ for the derivative of the spline features with respect to the first coordinate $y_j$. Then
\begin{equation*}
\partial_{y_j}\,s_j(y_{eij},y_{ei\pa(j)},a_{ej})
=\Psi_j(x_{eij})^\top \theta_{j,a_{ej}}.
\end{equation*}
Plugging this parameterization into the empirical score-matching loss for the $j$th component yields the quadratic objective
\begin{equation*}
\widehat{\bL}_j(\theta_{j0},\theta_{j1})
=\frac{1}{M}\sum_{e=1}^M\frac{1}{N_e}\sum_{i=1}^{N_e}
\left\{
\Psi_j(x_{eij})^\top \theta_{j,a_{ej}}
+\frac{1}{2}\bigl(\phi_j(x_{eij})^\top \theta_{j,a_{ej}}\bigr)^2
\right\}.
\end{equation*}
We fit $\theta_{j0},\theta_{j1}$ with ridge regularization,
\begin{equation*}
(\widehat\theta_{j0},\widehat\theta_{j1})
\in\arg\min_{\theta_{j0},\theta_{j1}}
\widehat{\bL}_j(\theta_{j0},\theta_{j1})
+\frac{\lambda}{2}\bigl(\|\theta_{j0}\|_2^2+\|\theta_{j1}\|_2^2\bigr),
\end{equation*}
which decouples over $a\in\{0,1\}$. For each $a$, define the index set $\cI_{ja}:=\{(e,i):a_{ej}=a\}$, the feature and derivative matrices
\begin{equation*}
\Phi_{ja}\in\R^{|\cI_{ja}|\times m},\quad \Phi_{ja}(r,:)=\phi_j(x_{eij})^\top,
\qquad
\Psi_{ja}\in\R^{|\cI_{ja}|\times m},\quad \Psi_{ja}(r,:)=\Psi_j(x_{eij})^\top,
\end{equation*}
and the normal equations
\begin{equation*}
\widehat\theta_{ja}
=-
\left(\Phi_{ja}^\top\Phi_{ja}+\lambda I_m\right)^{-1}\Psi_{ja}^\top \mathbf{1}.
\end{equation*}
The resulting score estimator is $\widehat s_j(x,a)=\phi_j(x)^\top \widehat\theta_{ja}$, and $\partial_{y_j}\widehat s_j(x,a)=\Psi_j(x)^\top \widehat\theta_{ja}$, which are the quantities used in the first- and second-order Tweedie updates.

\subsection{Addressing scale-permutation indeterminacy} 
From the objective~\eqref{mle-A}, the columns of $\bA$ are identifiable only up to signed permutations. Specifically, if $\bD_{\text{sign}}$ be a diagonal matrix with entries in $\{\pm 1\}$ and $\bP$ a permutation matrix, then objective~\eqref{mle-A} remains unchanged if we replace $\bD$ with $\bD \bD_{\text{sign}}$ and $\bO$ with $\bP \bO$. To obtain more stable updates, we select a canonical representative by replacing step~\eqref{A-update} with $\widehat \bA = \bU \bP \bV^\top \bD_{\text{sign}} \bD$, where $\bP$ is the permutation matrix that maximizes $\sum_{j = 1}^{d_Z} \left|(\bP \bm\Sigma)_{jj}\right|$ and $\bD_{\text{sign}} = \mathrm{sign}(\bm\Sigma)$.

\subsection{Justifying the sparse intervention approximation} \label{app-sparse-intervention}
The sparse intervention approximation becomes accurate in the large-sample regime.  By Bayes' rule, we have 
\begin{equation}
\begin{aligned}
f_{\ba_e}(y_j \mid \yb_{\pa(j)}) = \int_{\R^{|\pa(j)| + 1}} 
\rmp(y_j \mid z_j)\rmp_{a_{ej}}(z_j \mid z_{\pa(j)}) 
\rmp_{\ba_e}(z_{\pa(j)} \mid \yb_{\pa(j)}) \dd z_{\pa(j) \cup \{j\}}.
\end{aligned}
\end{equation}
As we observe samples $\{\yb_{ei\pa(j)}\}_{i=1}^{N_e}$, when $N_e$ is large, the posterior $\rmp_{\ba_e}(z_{\pa(j)} \mid \yb_{\pa(j)})$ which is invariant across $i$ centers around the distribution $$\widehat \rmp(z_{\pa(j)} \mid \yb_{\pa(j)}) \propto \rmp(\yb_{\pa(j)} \mid \widehat z_{\pa(j)})\, \widehat \rmg(z_{\pa(j)})$$ where $\widehat \rmg$ is the empirical distribution of the denoised estimates $\{\widehat z_{ei\pa(j)}\}_{i=1}^{N_e}$ learned from the normal means model. This substitute yields the approximation $$f_{\ba_e}(y_j \mid \yb_{\pa(j)}) \approx \int_{\R^{|\pa(j)| + 1}} \rmp(y_j \mid z_j)\, \rmp_{a_{ej}}(z_j \mid z_{\pa(j)})\, \widehat \rmp(z_{\pa(j)} \mid \yb_{\pa(j)})\, \dd z_{\pa(j) \cup \{j\}} =: f_{a_{ej}}(y_j \mid \yb_{\pa(j)}),$$ thus asymptotically the conditional density depends on the environment only through the local intervention indicator $a_{ej}$. Thus, the constraint to depend on $a_j$ can be viewed as a causal invariance constraint that requires $\by$ to act in a sparse way given the intervention $\ba$ \citep{yao2025unifying}.

\section{Proof of Thm.~\ref{thm:score-matching-minimizer}}
\solutionstructure*
\begin{proof}
\citet[Theorem~1]{hyvarinen2005estimation} shows that
\begin{equation}\label{eq:sm-ibp}
\mathbf{L}^{\cG}\left( s \right)
=
\sum_{e = 1}^M \EE{\by \sim f_{\ba_e}^{\cG}}{
\left\|s(\by, \ba_e) \right\|^2
+ 2\,\mathrm{tr}\!\left( \nabla_{\by}s(\by, \ba_e) \right)
+ \left\| s^{\cG}(\by,\ba_e) \right\|^2 }.
\end{equation}
The last term in~\eqref{eq:sm-ibp} does not depend on $s$ and can be dropped. Hence, minimizing $\mathbf{L}^{\cG}(s)$ is equivalent to minimizing
\begin{equation}\label{eq:sm-reduced}
\sum_{e = 1}^M \EE{\by \sim f_{\ba_e}^{\cG}}{
\left\|s(\by, \ba_e) \right\|^2
+ 2\,\mathrm{tr}\!\left( \nabla_{\by}s(\by, \ba_e) \right)}.
\end{equation}

Using the structure~\eqref{s-function}, we can rewrite~\eqref{eq:sm-reduced} as
\begin{equation}\label{eq:sm-expanded}
s \mapsto
\sum_{e=1}^M\sum_{j=1}^{d_Z}\EE{\by\sim f_{\ba_e}^{\cG}}{
\left|s_j\!\left(y_k,\yb_{\pa(k)},\ba_e\right)\right|^2
+2\,\partial_j s_j\!\left(y_j,\yb_{\pa(j)},\ba_e\right)}.
\end{equation}
\end{proof}

\section{Weighted \texorpdfstring{$f$}{f}-modeling CRL}
 The current method assigns uniform importance to each environment in the objective. However it is often desirable to learn a causal model that prioritizes accuracy in certain environments over others. For example, the base environment $\ba = 0$ may be of particular interest. To account for such preferences, we could consider the weighted score-matching objective:
\begin{equation*}
\widehat{s} = \arg \min_{s \in \cS} \sum_{e = 1}^M w_e \sum_{i = 1}^{N_e} \sum_{j = 1}^{d_Z} \left[ \partial_j [s(\by_{ei}, \ba_e)]_j + \frac{1}{2} \left([s(\by_{ei}, \ba_e)]_j\right)^2 \right],
\end{equation*}
where $w_e \geq 0$ denotes the weight assigned to environment $e$, subject to $\sum_{e=1}^M w_e = 1$. The weights determine the relative influence of each environment on the score matching loss. Higher $w_e$ assign more importance to environment $e$. 

In the absence of prior information or labeled test data, one may choose uniform weights $w_e = 1/M$, or alternatively set $w_e$ proportional to the sample size $N_e$ to account for imbalance across environments. If a particular environment (e.g., the base SCM with $\ba_e = 0$) is of interest, one may assign it a dominant weight (e.g., $w_e \approx 1$) and downweight the others accordingly. Other methods for choosing the environment weights are discussed \citep{Shen2025}.

\section{Details of the Empirical Study} \label{simulation-details}
In all experiments, we run CRL $f$-modeling (\Cref{alg:iterative-SM}) for $1{,}000$ iterations on an H100 GPU. We set $\eta=1$ when using empirical Bayes shrinkage, parameterize each $s_j$ by cubic splines with $8$ knots on $[-15,15]$, and initialize $\bA$ using the PCA loading matrix.

In the first experiment (left column of \Cref{fig:main-results}), we simulate $40$ independent runs of the specified causal model by drawing $\bA^\star$ and $\sigma^{2\star}$ anew in each run. In each run, we generate $N_e=2{,}000$ samples for each single-node intervention target.

In the last experiment (right column of \Cref{fig:main-results}), we simulate $M=20$ single-node interventions under a fixed specification $(\bA^\star,\sigma^{2\star})$, with $4$ environments per intervention and $N_e=500$ samples per environment. 

\section{Identification Theory} \label{sect-identifiability}
In this section, we review identifiability results of the latent variables $\bz$ in CRL, with an emphasis on the linear measurement model with nonlinear latent causal model.  Denote a generic estimator $\bz$ given $\bx$ by $\widehat \bz(\bx): \R^{d_X} \mapsto \R^{d_Z}$.

In representation learning, point identification is typically unattainable; instead, one often aims for set identification or identification up to an equivalence class \citep{ahuja2023interventional,yao2025unifying,moran2026towards}. Identification results can be algorithm-agnostic and do not require a specific method to recover the parameters. The working model~\eqref{eq:multi_domain_linear_measurements} corresponds to the setting of causal representation learning (CRL) with a linear measurement model, a general SCM, and known causal DAG and intervention targets. We now review some useful notions of identifiability to discuss our results.

\begin{definition}[Element-identifiability / Disentanglement]
\label{def:disentanglement}
A learned representation $\widehat \bz \in \R^{d_Z}$ is said to be \emph{element-identifiable} if there exists a permutation matrix $\bP_\pi \in \R^{d_Z \times d_Z}$ and an element-wise diffeomorphism $h(\bz) := (h_1(z_1), \ldots, h_{d_Z}(z_{d_Z})) \in \R^{d_Z}$ such that
\begin{equation}
    \widehat \bz = \bP_\pi h(\bz^\star).
\end{equation}
\end{definition}

Element-identifiability is also known as \emph{perfect latent recovery} in the language of \citet{varici2024general}, where a representation $\widehat \bz$ satisfying this condition is said to be \emph{perfect}. It is also referred to as \emph{disentanglement} in the sense of \citet[Definition 5]{moran2026towards}.

\begin{definition}[Scale-permutation-identifiability]
\label{def:scale-permutation-identifiability}
A learned representation $\widehat \bz \in \R^{d_Z}$ is said to be \emph{scale-permutation-identifiable} if there exists a permutation matrix $\bP_\pi \in \R^{d_Z \times d_Z}$ and a diagonal sign matrix $\bD \in \R^{d_Z \times d_Z}$ such that
\begin{equation}
    \widehat \bz = \bP_\pi \bD \bz^\star.
\end{equation}
\end{definition}
Scale-permutation-identifiability is a stronger form of element-identifiability. It describes recovery of the ground-truth latent variables up to permutation and rescaling, which are common forms of ambiguities in linear latent variable models.
\begin{definition}[Mixing-identifiablity]
\label{def:mixing-identifiability}
A learned representation $\widehat \bz \in \R^{d_Z}$ is said to be \emph{mixing-identifiable} if there exists a permutation matrix $\bP_\pi \in \R^{d_Z \times d_Z}$, a diagonal matrix $\bD \in \R^{d_Z \times d_Z}$ and sparse matrix $\bC \in \R^{d_Z \times d_Z}$ satisfying that $\bC_{jj} = 0$ for all $j \in [d_Z]$ and
\begin{equation*}
   \ch(i) \nsubseteq \ch(j) \implies \bC_{ij} = 0, \forall i \in [d_Z], 
\end{equation*}
such that $\widehat \bz = \bP_\pi (\bD + \bC) \bz^\star$. 
\end{definition}
This condition ensures that $\widehat \bz$ is constrained by the causal graph structure, where mixing occurs only among variables with nested parent sets.

There has been extensive recent work on the question of identifiability in CRL. Under \Cref{ass:linear_measurement_model}, identifiability typically holds only up to an equivalence class determined by the available interventions; for example, \citet{squires2023linear} establish such results under a linear measurement model together with a \emph{linear} latent SCM. Several works clarify the limits and possibilities of identifiability. Under hard interventions and a linear measurement model, \citet[Prop.~2]{squires2023linear} and \citet[Remark~2]{Buchholz2023} show that latent variables are identifiable only up to scaling and permutation, and that this is optimal without further assumptions on the causal variables~$\bz$. For stochastic interventions, \citet[Thm.~14]{varici2025score} show that under soft interventions and \Cref{ass:linear_measurement_model}, one can identify latent variables up to mixing with their parents and recover the DAG up to transitive closure; moreover, if each node receives one hard intervention, then under a linear latent SCM both the DAG and the latent variables are identifiable (up to scaling) \citep[Thm.~16]{varici2025score}. For general latent SCMs under soft interventions, \citet[Thm.~18]{varici2025score} show that intervening on each node at least once suffices to recover the latent variables up to parent mixing. Beyond linear SCMs, \citet[Thm.~3]{jin2024learning} establish recovery guarantees for general nonparametric SCMs via score-matching objectives, identifying latents up to componentwise monotone transformations under suitable conditions. In the fully nonparametric measurement setting, \citet{von2023nonparametric} provide identifiability guarantees (and impossibility results) showing that element-identifiability in the sense of \Cref{def:disentanglement} is the sharp guarantee in general, even when one perfect intervention per node is available.

We refer to \citet{varici2025score} for a comprehensive study of identifiability results for multi-domain CRL across linear and nonparametric models and a range of interventional designs. We emphasize, however, that most of the aforementioned CRL identifiability theory does \emph{not} assume the causal graph (or intervention targets) to be known, whereas our approach requires the true causal graph and intervention targets as inputs. This regime is studied in the setting of causal component analysis \citep{wendong2023causal}, who establish identifiability results for several classes of interventions in the known-graph setting.

\end{document}